\newif\ifSLIDES
\newif\ifLONG
\DeclareMathOperator{\minuslast}{^{-1}}
\title{Low-Complexity Nonparametric Bayesian \\ Online Prediction with Universal Guarantees}
\author{
	Alix Lh\'eritier\\
	Amadeus SAS\\
	F-06902 Sophia-Antipolis, France\\
	\texttt{alix.lheritier@amadeus.com}
	\And
	{Fr\'ed\'eric Cazals}\\
        Universit\'e C\^ote d'Azur\\
        Inria\\
	F-06902 Sophia-Antipolis, France\\
	\texttt{frederic.cazals@inria.fr}
}
\begin{document}

\maketitle

\begin{abstract}

We propose a novel nonparametric online predictor for discrete
labels conditioned on multivariate continuous features.
The predictor is based on a feature space discretization induced by a 
full-fledged k-d tree with randomly picked directions and a recursive Bayesian distribution, which allows to automatically learn the most relevant feature scales characterizing the conditional distribution.
We prove its pointwise universality, i.e., it achieves a normalized log loss performance asymptotically as good as the true conditional entropy of the labels given the features.
The time complexity to process the $n$-th sample point is $\bigO{\log n}$ in probability with respect to the distribution generating the data points, whereas other exact nonparametric methods require to process all past observations.
%
Experiments on challenging datasets show the computational and statistical efficiency of our algorithm in comparison to standard and state-of-the-art methods. 
\end{abstract}

\newcommand{\myparagraph}[1]{\vspace{-.1cm} \noindent{\bf #1}}

\section{Introduction}

\myparagraph{Universal online predictors.}
An \emph{online (or sequential) probability predictor} 
processes sequentially input symbols
$\rvl_1,\rvl_2,\dots$ belonging to some alphabet $\alphabet$. Before 
observing the next symbol in the sequence, it predicts it by
estimating the probability of observing each symbol of the
alphabet. Then, it observes the symbol and some loss is incurred
depending on the estimated probability of the current
symbol. Subsequently, it adapts its model in order to better predict
future symbols.  The goal of \emph{universal prediction} is to achieve
an asymptotically optimal performance independently of the generating
mechanism (see, e.g., the survey of Merhav and Feder
\cite{merhav1998universal}). When performance is measured in terms of
the logarithmic loss, prediction is intimately related to data
compression, gambling and investing (see, e.g.,
\cite{cover2006elements,cesa2006prediction}).

Barron's theorem \cite{Barron1998information} (see also
\cite[Ch.15]{grunwald2007minimum}) establishes a fundamental link
between prediction under logarithmic loss and learning: the better we
can sequentially predict data from a probabilistic source, the faster
we can identify a good approximation of it. This is of paramount
importance when applied to nonparametric models of infinite
dimensionality, where overfitting is a serious concern.
This is our case, since the predictor observes some associated side
information (i.e. \emph{features}) $\rvp_i\in\Rd$ before predicting
$\rvl_i\in\alphabet$, where $\alphabet = \{
\lambda_1,\dots,\lambda_{\size{\alphabet}}\}$. We consider the
probabilistic setup where the pairs of observations
$(\rvpi{i},\rvli{i})$ are i.i.d.~realizations of some random variables
$(\rvP,\rvL)$ with joint probability measure $\probadd$. Therefore, we
aim at estimating a nonparametric model of the conditional
measure $\probaid{\rvL|\rvP}$.

Nonparametric distributions can be approximated by universal
distributions over countable unions of parametric models (see e.g.,
\cite[Ch.~13]{grunwald2007minimum}). This approach requires defining
parametric models that can arbitrarily approximate the nonparametric
distribution as the number of parameters tend to infinity. For
example, models based on histograms with arbitrarily many bins have
been proposed to approximate univariate nonparametric densities (e.g.,
\cite{hall1988stochastic,rissanen1992density,yu1992data}).

Bayesian mixtures allow to obtain universal distributions for countable unions of parametric models (e.g., \cite{willems1995context,willems1998context}). 
Nevertheless, standard Bayesian mixtures suffer from the catch-up phenomenon,
i.e., their convergence rate is not optimal. In
\cite{erven2012catching}, it has been shown that a better convergence
rate can be achieved by allowing models to change over time, by
considering, instead of a set of distributions ${\cal M}$, a (larger)
set constituted by sequences of distributions of ${\cal M}$. 
The resulting \emph{switch distribution} has
still a Bayesian form but the mixture is done over sequences of
models.
%
%
%

Previous works on prediction with side information either are
non-sequential (e.g. PAC learning \cite{valiant1984theory}), or use
other losses (e.g. \cite{gyofi2005strategies,gyorfi1999simple} ) or
consider side information in more restrictive spaces
(e.g. \cite{algoet1992universal,cai2005universal}).  
Our work bears similarities to \cite{kozat2007universal,tziortziotis2014cover,veness2017online} but the objectives are different and so are the guarantees.
Recently, \cite{lheritier2018sequential} proposed a universal online predictor
for side information in $\Rd$ based on a mixture of nearest-neighbors
regressors with different $k(n)$ functions specifying the number of
neighbors at time $n$. Practically, the performances depend on the
particular set of functions---a design choice---and its time
complexity is linear in $n$ due to the exact nearest neighbor
search. 
Gaussian Processes (see, e.g., \cite{rasmussen2005}) are
  nonparametric Bayesian methods which can be used for online
  prediction with side information. It is conjectured that exact
  Gaussian processes with the radial basis function (RBF) kernel are
  universal under some conditions on the marginal measure
  $\probaid{\rvP}$ \cite[Sec.~13.5.3]{grunwald2007minimum}.  In
  practice, approximations are required to compute the predictive
  posterior for discrete labels (e.g.~Laplace) and the kernel width
 strongly affects the results.  In addition, their time
  complexity to predict each observation is $\bigO{n^3}$, making them
  practical for small samples sizes only.

We propose a novel nonparametric online predictor with universal
guarantees for continuous side information exhibiting two distinctive
features.  First, it relies on a hierarchical feature space
discretization and a recursive Bayesian distribution, automatically
learning the relevant feature scales and making it
scale-hyperparameter free.  Second, in contrast to other nonparametric
approaches, its time complexity to process the $n$-th sample point is
$\bigO{\log n}$ in probability.
%
%
Due to space constraints, proofs are presented in 
the  supplementary material.

\section{Basic definitions and notations}
In order to represent sequences, we use the notation $x^n\equiv x_1,\dots,x_n$. 
The functions $\nbSamples{\cdot}$ and $\nbLabels{\cdot}{\lambda}$, give, respectively, the length of a sequence and 
the number of occurrences of a symbol $\lambda$ in it.
%
Let $\probadd$ be the joint probability measure of 
$\rvL,\rvP$. 
Let $\probaid{\rvL},\probaid{\rvP}$ be their respective marginal measures 
and $\probaid{\rvP|\rvL}$ the probability measure of $\rvP$ conditioned on $\rvL$. 
%
%
%
The entropy of random variables is denoted $\entr{\cdot}$, while the
entropy of $\rvL$ conditioned on $\rvP$ is denoted
$\centri{\rvL}{\rvP}$. The mutual information between $\rvL$
and $\rvP$ is denoted $\mutinfo{\rvL}{\rvP}$. Logarithms are taken in base 2.

A \emph{finite-measurable partition} $A = (\Celi{1}, \dots , \Celi{n})$ of some set $\Omega$ is a subdivision of $\Omega$ 
into a finite number of disjoint measurable sets or \emph{cells} $\Celi{i}$ whose union is $\Omega$. 
An $n$-sample partition rule $\pi_n(\cdot)$ is a mapping from $\Omega^n$ 
to
the space of finite-measurable partitions for $\Omega$, denoted ${\cal A}(\Omega)$. 
A partitioning scheme for $\Omega$ is a countable
collection of 
$k$-sample partition rules $\Pi \equiv \{\pi_k\}_{k\in\mathbb{N}^+}$.
The partitioning scheme at time $n$ defines the set of partition rules 
$\Pi_n\equiv \{\pi_k\}_{k=1..n}$. 
For a given $n$-sample partition rule $\pi_n(\cdot)$ and a sequence
$\seqzN\in\Omega^n$, $\pi_n(z|\seqzN)$ denotes
the unique cell in $\pi_n(\seqzN)$ containing a given point $z\in\Omega$.
For a given partition $A$, let $A(z)$  denote the unique cell of $A$ containing $z$.
Let $\Celdof{\cdot}$ denote the operator that extracts the subsequences whose symbols have corresponding $\rvp_i\in\Celd$.

\toblack

\section{The \chronoswitch distribution} 

\newcommand{\figCellCreationContent}{
	\centering
	\subfigure[$\pi_1(z^1)$: $\Celi{1}$ and $\Celi{2}$ are created, $z_1\in \Celi{1}$. ]{\includegraphics[width = .2\columnwidth] {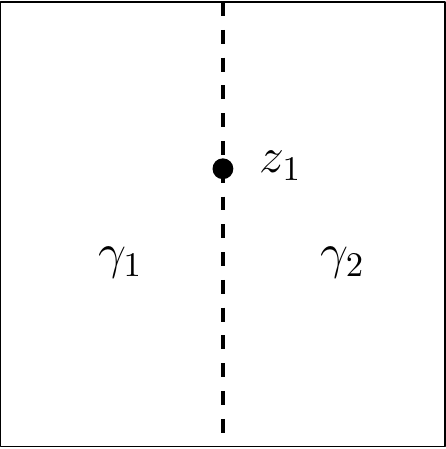}}~~ 
	\subfigure[$\pi_2(z^2)$: $\Celi{2,1}$ and $\Celi{2,2}$ are created. $z_2\in \Celi{2,2}$.]{\includegraphics[width = .2\columnwidth] {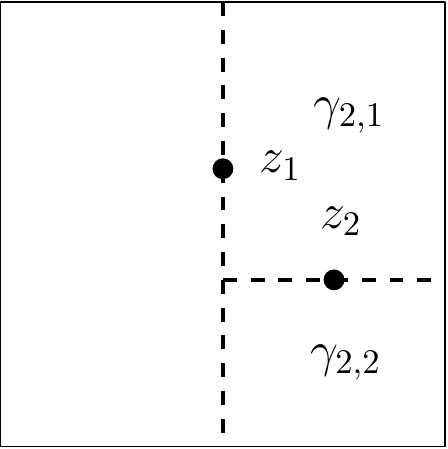}}
\caption{{\bf Cell creation process and cell splitting index}. The
  cell splitting index is defined w.r.t.~its subsequence:
  $\tau_2(\Omega)=1,\tau_2(\Celi{2})=1$ since $\Celiof{2}{z^2}=z_2$,
  and $\tau_2(\Celi{1})=\tau_2(\Celi{2,1})=\tau_2(\Celi{2,2})=\infty$.
}
\label{fig:creation-process} 
}

We define the \chronoswitch distribution $\estimPWithKDSi$ using a k-d tree based hierarchical partitioning and a switch distribution defined over the union of multinomial distributions implied by the partitioning.

\myparagraph{Full-fledged k-d tree based spatial partitioning.}
We obtain a hierarchical partitioning of $\Omega=\Rd$ using a full-fledged 
k-d tree \citep[Sec. 20.4]{devroye1996probabilistic} that is naturally amenable to an online construction since pivot points are chosen in the same order as sample points are observed.
Instead of rotating deterministically the axis of the projections, we sample the axis uniformly at each node of the tree. 
Formally, let $\Pi_\text{kd}\equiv \{\pi_k\}_{k\in\mathbb{N}^+}$ be the nested
partitioning scheme such that $\pi_n(z^n)$ is the spatial partition
generated by a full-fledged k-d tree after observing $z^n$.  In order
to define it recursively, let the base case be $\pi_0(z^0)\equiv\Rd$,
where $z^0$ is the empty string.  Then, $\pi_{n+1}(z^{n+1})$ is
obtained by uniformly drawing a direction $J$ in $1..d$ and by
replacing the cell $\Celd\in\pi_n(z^n)$ such that $z_{n+1}\in \Celd$
by the following cells
\begin{equation}
\label{eq:kd-split}
\begin{cases}
\Celi{1} \equiv \{z\in\Celd: z[J] \leq z_{n+1}[J] \} \\
\Celi{2} \equiv \{z\in\Celd: z[J] > z_{n+1}[J] \} 
\end{cases}
\end{equation}
where $\cdot[J]$ extracts the $J$-th coordinate of the given vector.
A spatial partition $A=\left\{
\Celi{1},\Celi{2},\dots,\Celi{\size{A}} \right\}$ of $\Rd$ defines a class of 
piecewise multinomial distributions characterized by 
$\thetaVector{A} \equiv
[\theta_1,\dots,\theta_{\size{A}}],\;\theta_i\in \Delta^{\size{\alphabet}},$
where $\Delta^{\size{\alphabet}}$ is the standard $\size{\alphabet}$-simplex. 
More precisely, $P_{\thetaVector{A}}(\cdot|\rvp)$ is multinomial with parameter 
$\theta_i$ if $\rvp\in\Celi{i}$.


\paragraph{Context Tree Switching.}
\begin{wrapfigure}{r}{0.5\linewidth}
\figCellCreationContent
\end{wrapfigure}
We adapt the Context Tree Switching (CTS) distribution \cite{veness2012context}  to use spatial cells as contexts. Since these contexts are created as sample points $z_i$ are observed, the chronology of their creation has to be taken into account.
Given a nested partitioning scheme $\Pi$ whose instantiation with $z^n$ creates a cell $\Celd$ and splits it into $\Celi{1}$ and $\Celi{2}$, we define the \emph{cell splitting index} $\tau_n(\Celd)$ as the index in the subsequence $\Celdof{\rvp^n}$ when $\Celi{1}$ and $\Celi{2}$ are created (see Fig.~\ref{fig:creation-process}). If $\Celd$ is not split by $\Pi$ instantiated with $z^n$, then we define $\tau_n(\Celd)\equiv\infty$.

At each cell $\Celd$, two models, defined later and denoted $a$ and $b$, are considered.
Let $w_{\Celd}(\cdot)$ be a prior over model index sequences 
$\seq{i}{m}\equiv 
i_1,\dots,i_{m} 
\in \{a,b\}^{m}$ at cell $\Celd$, 
recursively defined by 
\begin{equation*}
\label{eq:cts-prior}
w_{\Celd}(\seq{i}{m})\equiv \\
\begin{cases}
1 \text{ if }m=0 \\
\frac{1}{2} \text{ if }m=1 \\ 
w_{\Celd}(\seq{i}{m-1})\left( (1-\alpha_m^{\Celd}) \mathds{1}_{E}   
 +  \alpha_m^{\Celd} \mathds{1}_{\neg E} \right) \text{ if } m>1 
\end{cases}
, E\equiv\{i_{m}=i_{m-1}\}, \alpha_m^\Celd=m^{-1}.
\end{equation*}

In order to define the CTS distribution, we need the Jeffreys' mixture
over multinomial distributions also known as the Krichevsky-Trofimov
estimator \cite{trofimov1981performance} 
\begin{equation}
\estimPWithBayesJeffreys{\Celd}{}{\seqlN} \equiv
\int_{\theta \in \Delta^{\size{\alphabet}} }  \prod_{j\in 1\dots\size{\alphabet}}
\theta[j]^{\nbLabels{\seqlN}{\lambda_j}}   
w(\theta)d\theta
\end{equation}
with $\theta[j]$ being the $j$-th component of the vector $\theta$, $\nbLabels{\seqlN}{\lambda_j}$ the number of occurrences of $\lambda_j$ in $\seqlN$  and $w(\cdot)$ the Jeffreys' prior for the 
multinomial distribution \cite{jeffreys1946invariant}
i.e.~a Dirichlet distribution with parameters $(\nicefrac{1}{2},\dots,\nicefrac{1}{2})$.

%

Consider any cell $\Celd$ created by the partitioning scheme $\Pi$ instantiated with $z^n$.
$\Celd$ can either be refined into two child cells
$\Celi{1}$ and $\Celi{2}$ or have $\tau_n(\Celd)=\infty$. 
Given a sequence of labels $\seqlN$ such that all the corresponding positions $\rvp_i\in \Celd$,
the modified CTS distribution is given by
\begin{equation}
\label{eq:cts-def}
\estimPWithCTSi{\Celd}{\Pi}{\seqlcondzN} 
\equiv \\ 
\sum_{\seq{i}{n}\in \{a,b\}^{n}} 
w_\Celd(\seq{i}{n}) \prod_{k=1}^{n}
\left[
\mathds{1}_{\{i_k=a \}} 
\phi_a (\rvl_k | \seq{\rvl}{k-1} ) \right. 
\left. +  
\mathds{1}_{\{i_k=b \}} 
\phi_b^\Celd (\rvl_k | \seq{\rvl}{k-1},\seq{\rvp}{k} )
\right]
\end{equation}
where the predictive distributions of models $a$ and $b$ are given by 
\begin{equation}
\label{eq:model-a}
\phi_a (\rvl_k | \seq{\rvl}{k-1} ) \equiv \estimPWithBayesJeffreys{\Celd}{}{\rvl_k|\seq{\rvl}{k-1}} \equiv \frac{\estimPWithBayesJeffreys{\Celd}{}{\seq{\rvl}{k}}}{\estimPWithBayesJeffreys{\Celd}{}{\seq{\rvl}{k-1}}}
\end{equation} 
and 
\begin{equation}
\label{eq:model-b}
\phi_b^\Celd (\rvl_k | \seq{\rvl}{k-1},\seq{\rvp}{k} ) \equiv 
\begin{cases}
 \estimPWithBayesJeffreys{\Celd}{}{\rvl_k|\seq{\rvl}{k-1}} \text{ if }  k<\tau_k(\Celd)  \\ 
\frac{\estimPWithCTSi{\Celi{j}}{\Pi}{\Celiof{j}{\seq{\rvl}{k}}
		\vert\Celiof{j}{\seq{\rvp}{k} }
}}
{\estimPWithCTSi{\Celi{j}}{\Pi}{ \Celiof{j}{\seq{\rvl}{k}}\minuslast
		\vert\Celiof{j}{\seq{\rvp}{k} }\minuslast
}}
 \text{ with } j: \rvp_k\in\Celi{j} \text{, otherwise} 
\end{cases}
\end{equation}
where $\cdot\minuslast$ removes the last symbol of a sequence and, for the empty sequences $l^0,z^0$, $\estimPWithCTSi{\Celd}{\Pi}{l^0\vert z^0}\equiv 1$ and $\estimPWithBayesJeffreys{\Celd}{}{\seq{\rvl}{0}}\equiv 1$.


\myparagraph{Definition of $\estimPWithKDSi$.}
The \chronoswitch distribution is obtained from the modified CTS distribution on the context cells defined by the full-fledged k-d tree spatial partitioning scheme i.e. 
\begin{equation}
\estimPWithKDSi{\seqlcondzN} \equiv \estimPWithCTSi{\Rd}{\Pi_\text{kd}}{\seqlcondzN} 
.
\end{equation}

\begin{remark}
	In \cite{veness2012context}, the authors observe better empirical performance with 
	$\alpha_m^{\Celd}=n^{-1}$ for any cell $\Celd$, where $n$ is the number of samples observed at the root partition $\Omega$ when the $m$-th sample is observed in $\Celd$. 
	With this switching rate they were able to provide a good redundancy bound for bounded depth trees. In our unbounded case, we observed a better empirical performance with  $\alpha_m^\Celd=m^{-1}$. 
\end{remark}

\begin{remark}
	\label{rmk:kdw-def}
	A Context Tree Weighting \cite{willems1995context} scheme can be
	obtained by setting $\alpha_m^\Celd=0$.
	The corresponding distribution is denoted $\estimPWithKDWi$.
\end{remark}

\section{Pointwise universality}

In this section, we show that $\estimPWithKDSi$ is pointwise universal, i.e.~it achieves a normalized log loss asymptotically as good as the true conditional entropy of the source generating the samples.
More formally, we state the following theorem. 
\begin{theorem}
	\label{thm:chrono-universal}
	The \chronoswitch distribution is pointwise universal, i.e.
	\begin{equation}
	- \limn \frac{1}{n}\log \estimPWithKDSi{\seqLcondZN} 
	\leq \centri{\rvL}{\rvP} \as  
	\end{equation}
	for any probability measure $\probadd$ generating the samples
        such that $\probaid{\rvP|\rvL}$ are absolutely
          continuous with respect to the Lebesgue measure.
\end{theorem}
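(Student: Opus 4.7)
The plan is to show that \estimPWithKDSi{} dominates, up to subpolynomial redundancy, a piecewise Krichevsky--Trofimov estimator on a suitably refined partition induced by the random k-d tree, and then to exploit the shrinking cells to approach $\centri{\rvL}{\rvP}$ via a Lebesgue-differentiation argument.

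First, I would unfold the CTS recursion in \eqref{eq:cts-def} and single out one term of the sum over model-sequences $i^n$ to obtain a lower bound. Namely, for any ``depth cut'' $T$ of the k-d tree grown from $\seqzN$, pick the model-sequence that plays $b$ at every internal cell strictly above $T$ (so as to recurse into children) and $a$ at every leaf cell of $T$ (so as to predict with the local KT mixture $\estimPWithBayesJeffreys{\Celd}{}{\cdot}$). With $\alpha_m^\Celd=m^{-1}$, the switching prior $w_\Celd$ assigns to this canonical sequence a telescoping weight of order $1/n_\Celd$ at each cell. Aggregating the contributions over the cells of $T$ yields
\[
-\log \estimPWithKDSi{\seqLcondZN}
\;\leq\; \sum_{\Celd\in\mathrm{leaves}(T)} \bigl(-\log \estimPWithBayesJeffreys{\Celd}{}{\Celdof{\seqLN}}\bigr) + R(T,n),
\]
where $R(T,n) = \bigO{|\mathrm{leaves}(T)|\log n}$ collects the switching-prior cost along all cells of the cut.

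Second, I would invoke the classical Krichevsky--Trofimov redundancy bound per leaf,
\[
-\log \estimPWithBayesJeffreys{\Celd}{}{\Celdof{\seqLN}}
\;\leq\; n_\Celd\, \hat{H}_\Celd \;+\; \tfrac{\size{\alphabet}-1}{2}\log n_\Celd \;+\; \bigO{1},
\]
where $n_\Celd$ is the number of samples in $\Celd$ and $\hat{H}_\Celd$ the empirical entropy of their labels. Summing over leaves and dividing by $n$ gives
\[
-\tfrac{1}{n}\log \estimPWithKDSi{\seqLcondZN}
\;\leq\; \sum_{\Celd \in \mathrm{leaves}(T)} \tfrac{n_\Celd}{n}\, \hat{H}_\Celd
\;+\; \tfrac{1}{n}\bigl(R(T,n) + \bigO{|\mathrm{leaves}(T)|\log n}\bigr).
\]
Then I would invoke almost-sure refinement properties of the full-fledged random k-d tree (Devroye et al., Sec.~20.4): for a depth sequence $L_n\to\infty$ grown slowly enough, every cell's diameter shrinks to zero while its occupancy $n_\Celd\to\infty$ and $|\mathrm{leaves}(L_n)| = o(n/\log n)$. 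Under the hypothesis that $\probaid{\rvP|\rvL}$ is absolutely continuous with respect to Lebesgue measure, the Lebesgue differentiation theorem implies that the conditional label distribution restricted to a shrinking cell containing $\rvp$ converges to $\probaid{\rvL|\rvP=\rvp}$ for $\probaid{\rvP}$-almost every $\rvp$. Combined with a within-cell law of large numbers and a reverse-martingale / bounded-convergence argument along the nested k-d tree partitions, this yields $\sum_\Celd (n_\Celd/n)\,\hat{H}_\Celd \to \centri{\rvL}{\rvP}$ almost surely, while the redundancy term vanishes by the growth condition on $L_n$.

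The main obstacle is balancing the randomness of the partition against the in-cell statistics. The depth $L_n$ must be large enough for cell diameters to vanish (so Lebesgue differentiation applies), yet small enough that the minimum leaf occupancy diverges (so per-cell empirical entropies concentrate) and that $|\mathrm{leaves}(L_n)|\log n = o(n)$ (so redundancy is negligible). Establishing simultaneously, almost surely over both the sample and the uniform axis choices in the k-d tree construction, a single depth sequence $L_n$ meeting all three requirements is the crux of the argument; this is where the regularity results for random full-fledged k-d trees are indispensable.
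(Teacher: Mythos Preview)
Your decomposition into ``piecewise KT dominates'' plus ``shrinking cells approximate $\centri{\rvL}{\rvP}$'' is exactly the paper's two-lemma structure, but you couple the two pieces in a harder way than necessary. You let the cut $T=T(L_n)$ grow with $n$, which forces you to control \emph{simultaneously} the per-cell redundancy, the per-cell law of large numbers, and the cell-diameter decay---the balancing act you flag as the crux and leave unresolved. In particular, cells at depth $L_n=o(\log n)$ (needed for $2^{L_n}\log n=o(n)$) are not covered by the Devroye shrinking results, which concern the full tree of depth $\Theta(\log n)$; so the existence of a single sequence $L_n$ meeting all three requirements is not established by the references you invoke.

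The paper sidesteps this entirely by \emph{freezing} the partition. Fix $\eps>0$; by the shrinking-cell result (Lemma~\ref{lem:k-d-tree} plus Corollary~\ref{cor:silva}) there is an almost-surely finite $n(\eps)$ with $\centri{\rvL}{\pi_{n(\eps)}(\rvP\mid \seq{\rvP}{n(\eps)})}<\centri{\rvL}{\rvP}+\eps$. Call this partition $A$. Since the scheme is nested, $A\in\mathcal{P}_n(\seqZN)$ for all $n\ge n(\eps)$, so the redundancy bound (your first display, the paper's Lemma~\ref{lem:ctw-twice-universal}) applies with $|A|$ and $\Gamma_A$ \emph{constant in $n$}. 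Dividing by $n$ kills the $\bigO{|A|\log n}$ term trivially, and the leading term converges to $\centri{\rvL}{A(\rvP)}$ by Shannon--McMillan--Breiman for the i.i.d.\ source $(\rvL_i,A(\rvP_i))$---no within-cell LLN, no reverse martingale, no depth tuning. Letting $\eps\to 0$ finishes. In short: do not grow the partition with $n$; pick one good partition, let $n\to\infty$ against it, then refine.
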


In order to prove Thm.~\ref{thm:chrono-universal}, we first show that $\estimPWithCTSi{\Omega}{\Pi}$ is universal with respect to the class of piecewise multinomial distributions defined by any nested partitioning scheme $\Pi$. Then, we show that $\Pi_\text{kd}$ allows to approximate arbitrarily well any conditional distribution.

\myparagraph{Universality with respect to the class of piecewise multinomial distributions.}
Consider a nested partitioning scheme $\Pi$ for $\Omega$. 
$\Pi_n$ instantiated with some $z^n\in\Omega^n$ naturally defines a tree structure whose root node represents $\Omega$.
Given an  arbitrary set of internal nodes, 
we can prune the tree by transforming these internal nodes into leaves and  discarding the corresponding subtrees. 
The new set of leaf nodes define a partition of $\Omega$.
Let ${\cal P}_n(z^n)$ be the set of all the partitions that can be obtained by pruning the tree induced by $\Pi_n$ instantiated with $z^n$.

The next lemma shows that $\estimPWithCTSi{\Omega}{\Pi}$, defined in Eq.~\ref{eq:cts-def}, is universal 
with respect to the class of piecewise multinomial distributions defined on the 
partitions ${\cal P}_n(z^n)$.


\begin{lemma}
	\label{lem:ctw-twice-universal}
	Consider arbitrary sequences $l^n\in\alphabet^n,z^n\in\Omega^n, n\geq0$. Then, for any $A\in{\cal P}_{n}(z^n)$ and 	
	for any piecewise multinomial 
	distribution $P_{\thetaVector{A}}$, 
	the following holds
	\begin{equation}
	\label{eq:cts-bound}
	- \log \estimPWithCTSi{\Omega}{\Pi}{\seqlcondzN} \leq  
	- \log P_{\thetaVector{A}}(\seqlcondzN) 
	+ |A|\zeta\left(\frac{n}{|A|}\right) 
	+ \Gamma_A \log 2n
	+ \bigO{1}
	\end{equation}
	and 
	\begin{equation}
	- \limn \frac{1}{n}\log \estimPWithCTSi{\Omega}{\Pi}{\seqLcondZN} \leq 
	\centri{\rvL}{A(\rvP)} \as
	\end{equation}
	where $\Gamma_A$ is the number of nodes in the tree, induced by $\Pi_n$ instantiated with $z^n$, 
	that represents $A$ (i.e., the code length given by a \emph{natural} code for unbounded trees) and 
	\begin{equation}
	\zeta(x)\equiv 
	\begin{cases}
	x \log \size{\alphabet} & \text{ if } 0\leq x < 1 \\
	\frac{\size{\alphabet}-1}{2}\log x + \log \size{\alphabet} & \text{ if } x\geq 1 \\
	\end{cases}
	.
	\end{equation}
\end{lemma}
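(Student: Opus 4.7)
The plan is to bound $\estimPWithCTSi{\Omega}{\Pi}$ from below by selecting a single path in its Bayesian switching mixture that corresponds to pruning the online tree down to the partition $A$, and then applying the standard Krichevsky--Trofimov (KT) redundancy bound in each leaf cell of $A$.

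For the finite-sample bound, I use that at every cell $\nu$ of the tree induced by $\Pi_n$ instantiated with $z^n$, the CTS distribution (Eq.~\ref{eq:cts-def}) is a convex combination over $i^{n_\nu}\in\{a,b\}^{n_\nu}$ and therefore dominates any single term times its prior weight. At each internal cell $C'$ of the pruned tree representing $A$, I select the constant sequence $b^{n_{C'}}$, which by Eq.~\ref{eq:model-b} defers to the children's CTS distributions once $C'$ has been split. At each leaf cell $C$ of $A$, I select the constant sequence $a^{n_C}$, which uses the local KT estimator. Each constant sequence at a cell with $m\geq 1$ samples has switching-prior weight $w_\nu(a^m)=w_\nu(b^m)=\tfrac12 \prod_{k=2}^{m}(1-\tfrac1k) = \tfrac{1}{2m}\geq\tfrac{1}{2n}$. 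Unrolling the recursion down to the leaves of $A$ yields
\begin{equation*}
\estimPWithCTSi{\Omega}{\Pi}{\seqlcondzN} \;\geq\; \bigg(\prod_{\nu\in A}\tfrac{1}{2n_\nu}\bigg)\cdot\prod_{C\in A}\estimPWithBayesJeffreys{C}{}{C(l^n)},
\end{equation*}
where the first product runs over all $\Gamma_A$ nodes of the pruned tree and each $n_\nu\leq n$. Taking $-\log$ contributes the $\Gamma_A\log(2n)$ term.

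I then bound each leaf contribution by the standard KT redundancy: for any $\theta_C\in\Delta^{|\alphabet|}$ and $n_C\geq 1$,
\begin{equation*}
-\log \estimPWithBayesJeffreys{C}{}{C(l^n)} \;\leq\; -\log P_{\theta_C}(C(l^n)) + \tfrac{|\alphabet|-1}{2}\log n_C + c_{|\alphabet|},
\end{equation*}
together with the trivial bound $-\log \estimPWithBayesJeffreys{C}{}{C(l^n)} \leq n_C\log|\alphabet|$, which is sharper when $n_C$ is small. Summing over $C\in A$ gives $\sum_C -\log P_{\theta_C}(C(l^n)) = -\log P_{\thetaVector{A}}(l^n\mid z^n)$. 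The residual overhead aggregates to $|A|\,\zeta(n/|A|)$ by a case analysis on whether the average occupancy $n/|A|$ is above or below $1$: in the former case Jensen's inequality on the non-empty cells gives $\sum_{C: n_C\geq 1}\log n_C\leq|A|\log(n/|A|)$, while the per-cell KT constant absorbs the additive $\log|\alphabet|$ appearing in $\zeta$; in the latter case the trivial bound summed over all cells yields $n\log|\alphabet|=|A|\cdot(n/|A|)\log|\alphabet|$. This establishes Eq.~\ref{eq:cts-bound}.

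For the asymptotic claim, I choose $\theta_C$ equal to the true conditional mass function $\probaCond{\rvL}{A(\rvP)=C}$, so that $P_{\thetaVector{A}}$ is the best piecewise-multinomial predictor consistent with $A$. The strong law of large numbers then yields $-\tfrac{1}{n}\log P_{\thetaVector{A}}(L^n\mid Z^n) \xrightarrow{\text{a.s.}} \centri{\rvL}{A(\rvP)}$, while the overhead terms $|A|\zeta(n/|A|)/n$ and $\Gamma_A\log(2n)/n$ both vanish for any fixed $A$. The main obstacle is the chronological bookkeeping: cells in the online tree appear dynamically, so at an internal cell ``model $b$'' still predicts with local KT for samples arriving before the splitting index $\tau_n(C')$, deferring to the children only afterwards. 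Because in a full-fledged k-d tree $\tau_n(C')\leq 2$, at most one pre-split sample per internal cell incurs an additional $O(\Gamma_A)$ log-loss, which is absorbed into the $\Gamma_A\log(2n)$ overhead. The remainder reduces to the standard CTS path lower-bound argument and KT redundancy.
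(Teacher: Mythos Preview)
Your approach is essentially the paper's: drop the switching mixture to the single path that uses model $b$ at every internal node of the pruned tree and model $a$ at every leaf, bound each constant-sequence prior weight by $\tfrac{1}{2n}$, and then apply the KT redundancy bound cellwise; for the limit you invoke the law of large numbers where the paper cites Shannon--McMillan--Breiman (equivalent here since the source is i.i.d.). The derivation of the $|A|\zeta(n/|A|)$ term via Jensen and the trivial bound is just an explicit version of the Willems result the paper cites.

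The one gap is your treatment of the chronology terms. The lemma is stated for an \emph{arbitrary} nested scheme $\Pi$, not just $\Pi_{\text{kd}}$, so you cannot invoke $\tau_n(C')\le 2$; a general $\Pi$ may accumulate arbitrarily many pre-split KT predictions at each internal cell. The paper's fix is cleaner and fully general: at each internal node $\gamma$ the product over $k<\tau_n(\gamma)$ telescopes to $\estimPWithBayesJeffreys{\gamma}{}{\underline{l}^{\tau_n(\gamma)-1}}$, and the telescoping over $k\ge\tau_n(\gamma)$ leaves residual denominators $\estimPWithCTSu{\gamma_j}{\cdot}^{-1}$ evaluated at the split time. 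All of these quantities depend only on the fixed tree structure of $A$ and on a fixed prefix of $z^n$, so they are collected into a constant $\kappa$ that does not grow with $n$ and contributes only the $O(1)$ term. With that correction your argument matches the paper's. (A minor notational slip: your displayed product ``$\prod_{\nu\in A}$'' should range over all $\Gamma_A$ nodes of the pruned tree, not just the leaves $A$; and the extra constant belongs in the $O(1)$ slot, not in $\Gamma_A\log 2n$.)
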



\begin{remark}
In a Context Tree Weighting scheme ($\alpha_m^\Celd=0$), the $\log 2n$ factor in Eq.~\ref{eq:cts-bound} disappears. See proof of Lemma \ref{lem:ctw-twice-universal}. 
Thus, universality holds for this case too.
\end{remark}

\myparagraph{Universal discretization of the feature space.}
In order to prove that the k-d tree based partitions allow to
approximate arbitrarily well the conditional entropy
$\centri{\rvL}{\rvP}$, we use the following corollary of
\cite[Thm. 4.2]{silva2008optimal}. 
\begin{corollary}
\label{cor:silva}
Let $\diameter(\Celd)\equiv\sup_{x,y\in \Celd}\vvnorm{x-y}$. Let
$\probadd$ be any probability measure such that
  $\probaid{\rvP|\rvL}$ are absolutely continuous with respect to the
  Lebesgue measure.  Given a partition scheme $\Pi \equiv \{\pi_k\}_{k\in\mathbb{N}^+}$, 
if $\forall \delta>0$
	\begin{equation}
	\label{eq:shrinking-cond}
	\probai{\rvP}{\left\{ \rvp\in \Rd:\diameter(\pi_n(\rvp|\seqZN))>\delta 
		\right\}}
	\toas 0   
	\end{equation}
	then $\Pi$ universally discretizes the feature space, i.e.
	\begin{equation}
	\centri{\rvL}{\pi_n(\rvP|\seqZN)} \toas \centri{\rvL}{\rvP} .
	\end{equation}
\end{corollary}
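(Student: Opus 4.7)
The plan is to reduce the corollary to Silva's Theorem 4.2 \cite{silva2008optimal} by rewriting the conditional entropies through the identity linking them to mutual information, for which Silva's theorem directly applies.

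First, I would write
\begin{equation*}
\centri{\rvL}{\pi_n(\rvP|\seqZN)} = \entr{\rvL} - \mutinfo{\rvL}{\pi_n(\rvP|\seqZN)}, \qquad \centri{\rvL}{\rvP} = \entr{\rvL} - \mutinfo{\rvL}{\rvP}.
\end{equation*}
The marginal entropy $\entr{\rvL}$ does not depend on $n$ or on the partition, so the claim
$\centri{\rvL}{\pi_n(\rvP|\seqZN)} \toas \centri{\rvL}{\rvP}$
is equivalent to the almost-sure convergence
$\mutinfo{\rvL}{\pi_n(\rvP|\seqZN)} \toas \mutinfo{\rvL}{\rvP}$.

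Next, I would invoke Silva's Theorem 4.2, which asserts precisely that a data-dependent partitioning scheme satisfying a shrinking-cell condition of the form in Eq.~\ref{eq:shrinking-cond} yields almost-sure convergence of the partition-induced mutual information to the true mutual information, provided the relevant conditional measures are absolutely continuous with respect to the Lebesgue measure. The hypothesis on $\probaid{\rvP|\rvL}$ in our statement is exactly the regularity needed for that theorem to apply, since it ensures that the joint law $\probadd$ admits a density in the $\rvP$ coordinate for each label value. The shrinking-cell condition in our statement is verbatim the one required by the theorem.

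The main bookkeeping obstacle is ensuring that the adaptation of Silva's result to our notation (data-dependent partitions $\pi_n(\cdot|\seqZN)$ indexed by the observed sample) preserves the hypothesis on the cell-containing-$\rvp$ diameter. Once this compatibility is verified, combining Silva's theorem with the entropy--mutual-information identity above immediately yields the claim, so the corollary is essentially a one-line consequence after the reformulation.
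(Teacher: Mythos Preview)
Your proposal is correct and follows essentially the same route as the paper: both reduce the conditional-entropy convergence to mutual-information convergence via $\centri{\rvL}{\rvP}=\entr{\rvL}-\mutinfo{\rvL}{\rvP}$ and then invoke \cite[Thm.~4.2]{silva2008optimal} under the shrinking-cell condition. The only cosmetic difference is that the paper writes the mutual information as $\expeci{\probaid{\rvL}}{\divKL{\probaid{\rvP|\rvL}}{\probaid{\rvP}}}$ and applies Silva's theorem at the level of KL divergences (noting explicitly that $\probaid{\rvP|\rvL}\ll\probaid{\rvP}$), whereas you phrase the invocation directly in terms of mutual information; since $\rvL$ is finite, the two are immediately equivalent.
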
	


The next lemma provides the required shrinking condition for the k-d tree based partitioning.
\begin{lemma}
\label{lem:k-d-tree}
$\Pi_\text{kd}$ satisfies the shrinking condition 
of Eq.~\ref{eq:shrinking-cond} and, thus, universally discretizes the feature space.
\end{lemma}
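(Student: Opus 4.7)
The plan is to show, for $\probaid{\rvP}$-almost every $z\in\Rd$, the almost-sure convergence $\diam(C_n(z))\to 0$, where $C_n(z)\equiv\pi_n(z|\seqZN)$ is the cell containing $z$. By Fubini together with bounded convergence (the indicator $\mathds{1}\{\diam(C_n(z))>\delta\}$ is uniformly bounded by $1$), this yields Eq.~\ref{eq:shrinking-cond}, and the preceding corollary then delivers universal discretization.

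Since each $\probaid{\rvP|\rvL}$ is absolutely continuous, so is the marginal $\probaid{\rvP}$, with density $f$. I would fix a Lebesgue point $z$ of $f$ with $f(z)>0$; such points carry full $\probaid{\rvP}$-mass. Because $C_n(z)$ is a rectangular box whose side length $L_n^{(j)}$ in direction $j$ is non-increasing (it changes only when the cell is split in direction $j$), we have $L_n^{(j)}\downarrow L_\infty^{(j)}\geq 0$, and it suffices to establish $L_\infty^{(j)}=0$ almost surely for every $j$.

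First, I would argue that $C_n(z)$ is split infinitely often and that each coordinate direction is selected infinitely often among those splits. The Lebesgue-point property gives $\probaid{\rvP}(B_r(z))\geq (f(z)/2)\,|B_r(z)|$ for small $r$, so as long as $C_n(z)$ is non-degenerate it carries positive $\probaid{\rvP}$-mass; a Borel--Cantelli argument then shows that infinitely many samples $Z_i$ fall in $C_{i-1}(z)$, each producing a split. Since the direction of each split is drawn i.i.d.~uniformly on $\{1,\dots,d\}$, the strong law of large numbers guarantees that each direction is selected infinitely often almost surely.

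The principal obstacle is showing $L_\infty^{(j)}=0$. Let $X_k$ denote the $j$-th coordinate of the sample producing the $k$-th split in direction $j$ of a cell containing $z$, and let $R_k\equiv L^{(j)}_{\text{new}}/L^{(j)}_{\text{old}}\in[0,1]$ be the corresponding shrinkage factor. Conditionally on the past, $X_k$ follows the marginal in direction $j$ of $\probaid{\rvP}$ restricted to the current cell --- absolutely continuous and non-atomic on the interval $[a_k,b_k]$. Under a uniform density, a direct computation yields $\mathbb{E}[\log R_k\mid\text{past}]=-1+H(\tilde{z}_k)\leq -1+\log 2<0$ in nats, where $\tilde{z}_k\equiv(z[j]-a_k)/(b_k-a_k)$ and $H$ is the binary entropy. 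The delicate step is to transfer this strictly negative log-shrinkage to the general a.c.~setting without assuming cell shrinkage --- precisely what is to be proved. I would handle this by contradiction: assume $L_\infty^{(j)}>0$ with positive probability; then on that event $C_n(z)$ converges to a non-degenerate limit box $C_\infty\ni z$ and the conditional split laws converge to a non-atomic limit, which still enforces strict expected shrinkage at each split, contradicting $L_n^{(j)}\downarrow L_\infty^{(j)}>0$. This mirrors classical analyses of randomized k-d trees (cf.~Devroye, Györfi and Lugosi). Summing the uniformly negative expected log-shrinkages then yields $\prod_k R_k\to 0$ almost surely, hence $L_\infty^{(j)}=0$, $\diam(C_n(z))\to 0$ for $\probaid{\rvP}$-a.e.~$z$, and the lemma follows.
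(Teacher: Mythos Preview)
Your overall strategy---showing $\diam(C_n(z))\to 0$ almost surely for $\probaid{\rvP}$-a.e.\ $z$, then passing to the integrated statement via bounded convergence---is sound and matches the paper in spirit, but the step where you argue $L_\infty^{(j)}=0$ has a genuine gap. The contradiction you sketch assumes that if $L_\infty^{(j)}>0$ then $C_n(z)$ converges to a \emph{non-degenerate} limit box $C_\infty$; this is unjustified, since you have only controlled one coordinate. If the remaining sides shrink to zero, $C_n(z)$ collapses to a lower-dimensional slice, and the $j$-th marginal of $\probaid{\rvP}(\,\cdot\mid C_n(z))$---which governs the split location---need not stabilize to anything with uniformly negative expected log-shrinkage. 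If that conditional marginal piles mass near an endpoint of $[a_\infty,b_\infty]$ away from $z[j]$, the ratios $R_k$ can drift to $1$, which is perfectly compatible with $L_n^{(j)}\downarrow L_\infty^{(j)}>0$ and yields no contradiction. Your identity $\mathbb{E}[\log R_k\mid\text{past}]=-1+H(\tilde z_k)$ holds only when the split point is \emph{uniform} on the current interval, which is never guaranteed here, even at a Lebesgue point of the density.

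The device you are missing---and which the paper uses as its first move---is the \emph{monotone transformation invariance} of the full-fledged k-d tree: since the tree structure depends only on coordinate-wise ranks, one may assume without loss of generality that every marginal of $\rvP$ is uniform on $[0,1]$. After this reduction the paper does \emph{not} attempt a log-shrinkage martingale either; it passes via Markov's inequality to $\expeci{\probaid{\rvP}}{\diam(\pi_n(\rvP|\seqZN))}\to 0$ a.s.\ and invokes the proof of Theorem~20.3 in Devroye--Gy\"orfi--Lugosi, which constructs an explicit event $E(x,\seqZN,\epsilon)$---controlling the distances from $x$ to all faces of its cell at sample sizes $n^{1/3},n^{2/3},n$---on which $\diam(\pi_n(x|\seqZN))\leq 2\epsilon\sqrt d$, and shows $\probai{\seqZN}{E(x,\seqZN,\epsilon)}\to 1$ for a.e.\ $x$. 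Your reference to DGL therefore points to the right source but does not cover the argument you actually wrote.
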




\myparagraph{Pointwise universality.} 
The proof of Thm.~\ref{thm:chrono-universal}
on the pointwise universality of $\estimPWithKDSi$ 
stems from a combination of Lemmas~\ref{lem:ctw-twice-universal} and \ref{lem:k-d-tree}---see Appendix.

\section{Online algorithm}

Since a direct computation of Eq.~(\ref{eq:cts-def}) is intractable
and an online implementation is desired, we use the recursive form of
\cite[Algorithm 1]{veness2012context}, which performs the exact
calculation.  We denote by $\estimPWithSEQu{\Celd}$ the sequentially
computed \chronoswitch distribution at node $\Celd$.  In
Section~\ref{sec:correctness}, we show that
$\estimPWithSEQi{\Celd}{\seqlcondzN} =
\estimPWithCTSi{\Celd}{\Pi_\text{kd}}{\seqlcondzN}$.

\subsection{Algorithm}

\myparagraph{Outline.}
For each node of the k-d tree, the algorithm maintains 
two weights denoted $w_\Celd^a$ and $w_\Celd^b$.
As follows from \cite[Lemma 2]{veness2012context}, if $\rvl^t,\rvp^t$ are the subsequences observed in $\Celd$ and  $w_\Celd^a$ is the weight before processing $\rvl_t$, then, $w_\Celd^a \estimPWithBayesJeffreys{\Celd}{}{\rvl_t|\seq{\rvl}{t-1}}$  corresponds to the contribution of all possible model sequences ending in model $a$ (KT) to the total probability assigned to $\rvl^{t}$ by the CTS distribution.
Analogously, $w_\Celd^b \estimPWithSEQu{\Celd}{\rvl_t|\rvl^{t-1}, \rvp^t}$  corresponds to the contribution of all possible model sequences ending in model $b$ (CTS).

We now describe the three steps that allow the online computation of $\estimPWithSEQu{\Celd}{\seqlcondzN}$ given by Eq.~\ref{eq:final-estimate}. 
The algorithm starts with only a root node representing $\Rd$. When a new point $\rvp_*$ is observed, the following steps are performed.

\myparagraph{Step 1: k-d tree update and new cells' initialization.} The point  $\rvp_*$ is passed down the k-d tree until it reaches a leaf cell $\Celd$.
Then, a coordinate $J$ is uniformly drawn from $1\dots d$ and two child nodes, corresponding to the new cells defined in Eq.~\ref{eq:kd-split} with $z_*$ as splitting point, are created. 

Let $\rvl^n,\rvp^n$ be the subsequences observed in $\Celd$ and thus $\rvp_n=\rvp_*$.
Since the new cells may contain some of the 
symbols in $\seq{\rvl}{n-1}$, the following initialization is 
performed at each new node  $\Celi{i}, i\in\{1,2\}$:
\begin{align}
\label{eq:weight-init}
\begin{split}
&w_{\Celi{i}}^a \leftarrow \frac{1}{2} 
\estimPWithBayesJeffreys{\Celi{i}}{}{\Celiof{i}{\seq{\rvl}{n-1}}}\\     
&w_{\Celi{i}}^b \leftarrow \frac{1}{2} 
\estimPWithBayesJeffreys{\Celi{i}}{}{\Celiof{i}{\seq{\rvl}{n-1}}}     
\end{split}
\text{, with }\estimPWithBayesJeffreys{\Celi{i}}{}{\Celiof{i}{\seq{\rvl}{n-1}}}=1   
\text{ if }\Celiof{i}{\seq{\rvl}{n-1}}\text{ is empty}.
\end{align}
 
\myparagraph{Step 2: Prediction.}
The probability assigned to the subsequence $\seqlN$ given $\seqzN$ observed in $\Celd$ is 
\begin{equation}
\label{eq:final-estimate}
\estimPWithSEQu{\Celd}{\seqlcondzN} \leftarrow 
w_\Celd^a \estimPWithBayesJeffreys{\Celd}{}{\rvl_n|\rvl^{n-1}} + 
w_\Celd^b\estimPWithRECu{\Celd}{\rvl_n|\rvl^{n-1},\rvp^n}
\end{equation}
where 
\begin{equation}
\estimPWithRECu{\Celd}{\rvl_n|\rvl^{n-1},\rvp^n} \leftarrow 
\begin{cases} 
\estimPWithBayesJeffreys{\Celd}{}{\rvl_n|\seq{\rvl}{n-1}} \text{ if } n < \tau_n(\Celd) \\
\frac{\estimPWithSEQu{\Celi{j}}{\Celiof{j}{\seqlN}\vert \Celiof{j}{\seqzN} } }
{\estimPWithSEQu{\Celi{j}}{\Celiof{j}{\seqlN}\minuslast \vert \Celiof{j}{\seqzN}\minuslast}} 
\text{ with } j:\rvpi{n} \in \Celi{j},\text{ otherwise}
\end{cases}
.
\end{equation} 


\myparagraph{Step 3: Updates.}
Having computed  the probability assignment of Eq.~\ref{eq:final-estimate}, the weights of the nodes corresponding to the cells $\{\Celd: \rvp_*\in \Celd\}$
are updated. Given a node $\Celd$ to be updated, let $\rvl^n,\rvp^n$ be the subsequences observed in $\Celd$.
The following updates are applied:
\begin{align}
	\label{eq:weight-update2}
	\begin{split}
		&w_\Celd^a \leftarrow \alpha^\Celd_{n+1} \estimPWithSEQu{\Celd}{\seqlcondzN} +
		\beta^\Celd_{n+1} w_\Celd^a 
		\estimPWithBayesJeffreys{\Celd}{}{\rvl_n|\seq{\rvl}{n-1}} \\
		&w_\Celd^b \leftarrow \alpha^\Celd_{n+1} \estimPWithSEQu{\Celd}{\seqlcondzN} +
		\beta^\Celd_{n+1} w_\Celd^b 
		\estimPWithRECu{\Celd}{\rvl_n|\rvl^{n-1},\rvp^n}
	\end{split}
\end{align}
where $\beta^\Celd_n \equiv (1 - 2\alpha^\Celd_n)$. When $\Celd$ has just been created (i.e. $\Celd$ is a leaf node), these updates reduce to
\begin{align}
	\label{eq:weight-update}
	\begin{split}
		&w_\Celd^a \leftarrow w_\Celd^a \estimPWithBayesJeffreys{\Celd}{}{\rvl_n|\seq{\rvl}{n-1}} \\
		&w_\Celd^b \leftarrow w_\Celd^b \estimPWithBayesJeffreys{\Celd}{}{\rvl_n|\seq{\rvl}{n-1}} 
	\end{split}
	.
\end{align}


%

%


\begin{remark}
	\label{rmk:KT-seq}
The KT estimator can be computed sequentially using the following
formula~\cite{shtar1987universal}:
	\begin{equation}
	\estimPWithBayesJeffreys{}{}{\rvl_n|\seq{\rvl}{n-1}} =
	\frac{\nbLabels{\rvl^{n-1}}{\rvli{n}}+\frac{1}{2}} 
		{\nbSamples{\rvl^{n-1}} + \frac{\size{\alphabet}}{2}}
		.
	\end{equation}
%
	Therefore, the sequential computation only requires  maintaining the 
	counters 
	$\nbLabels{\rvl^{n-1}}{\rvli{n}}$ for each cell.
\end{remark}
\begin{remark}
	 Samples $z_i$ only need to be stored at leaf nodes. Once a leaf node is split, they are moved to their corresponding child nodes.
\end{remark}

\subsection{Correctness}
\label{sec:correctness}

The steps of our algorithm are the same as those of \cite[Algorithm 1]{veness2012context} except for the  initialization of Eq.~\ref{eq:weight-init}.
In fact, as shown in the next lemma, it is equivalent to building the partitioning tree from the beginning (assuming, without loss of generality, that $\seqzN$ is known in advance) and applying the original algorithm at every relevant context. 

\begin{lemma}
\label{lem:delayed-equiv}
Let $n\in\mathbb{N}^+$ and assume the partitioning tree for $\rvp^n$ is built from the beginning. Let $\Celd$ be any node of the tree.
 If the original initialization and update equations from \cite[Algorithm 1]{veness2012context} (corresponding respectively to Eq.~\ref{eq:weight-init} with an empty sequence and Eq.~\ref{eq:weight-update2}) are applied, the weights, after observing $\rvl^t$ in $\Celd$ with  $t<\tau_n(\Celd)$, are $w_\Celd^a = \frac{1}{2} 
\estimPWithBayesJeffreys{\Celd}{}{\rvl^t}$ and $w_\Celd^b = \frac{1}{2} 
\estimPWithBayesJeffreys{\Celd}{}{\rvl^t}$, which correspond to those obtained after the initialization of Eq.~\ref{eq:weight-init} and the updates of Eq.~\ref{eq:weight-update}.
\end{lemma}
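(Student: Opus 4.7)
The plan is to prove Lemma~\ref{lem:delayed-equiv} by induction on $t$, the number of samples that have been observed in $\Celd$, establishing the joint invariant $w_\Celd^a = w_\Celd^b = \frac{1}{2}\estimPWithBayesJeffreys{\Celd}{}{\rvl^t}$ for every $t < \tau_n(\Celd)$, and then reading the two claimed identifications directly off this invariant. The base case $t=0$ is immediate: the original initialization of \cite{veness2012context} sets both weights to $\frac{1}{2}$, and the convention $\estimPWithBayesJeffreys{\Celd}{}{\rvl^0} = 1$ matches the claim.

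For the inductive step, the key observation is that while $\Celd$ is still a leaf, the recursive predictive distribution of model $b$ collapses to the KT estimator, namely $\estimPWithRECu{\Celd}{\rvl_t|\rvl^{t-1},\rvp^t} = \estimPWithBayesJeffreys{\Celd}{}{\rvl_t|\rvl^{t-1}}$, by the first branch of Eq.~\ref{eq:model-b}. Combining the inductive hypothesis $w_\Celd^a + w_\Celd^b = \estimPWithBayesJeffreys{\Celd}{}{\rvl^{t-1}}$ with Eq.~\ref{eq:final-estimate} then gives $\estimPWithSEQu{\Celd}{\rvl^t|\rvp^t} = \estimPWithBayesJeffreys{\Celd}{}{\rvl^t}$. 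Plugging this back into both lines of the update Eq.~\ref{eq:weight-update2}, the two updates become formally identical, and using $\beta^\Celd_{t+1} = 1 - 2\alpha^\Celd_{t+1}$ yields exactly $\frac{1}{2}\estimPWithBayesJeffreys{\Celd}{}{\rvl^t}$ for each weight, closing the induction.

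To match the delayed scheme, I would apply the invariant to each child cell $\Celi{i}$ \emph{as if} the partitioning tree had been built from the start with $\rvp^n$ known in advance: at the moment $\Celi{i}$ is created, the projected subsequence $\Celiof{i}{\seq{\rvl}{n-1}}$ is exactly what the original algorithm would have processed inside $\Celi{i}$ up to that point, so the invariant delivers $w^a_{\Celi{i}} = w^b_{\Celi{i}} = \frac{1}{2}\estimPWithBayesJeffreys{\Celi{i}}{}{\Celiof{i}{\seq{\rvl}{n-1}}}$, which coincides with Eq.~\ref{eq:weight-init}. For the same leaf-collapse reason, Eq.~\ref{eq:weight-update2} reduces to Eq.~\ref{eq:weight-update} while the new cell remains a leaf, closing the equivalence.

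The main obstacle is purely bookkeeping around the chronology: one has to verify that re-indexing samples by their order of arrival into $\Celd$ is compatible with the time-varying switching rate. Since $\alpha^\Celd_m = m^{-1}$ depends only on the local count $m$ within $\Celd$, and the delayed construction preserves this local indexing, no correction term appears and the equivalence goes through cleanly.
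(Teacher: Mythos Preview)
Your proof is correct and follows essentially the same route as the paper's: induction on $t$, using that for $t<\tau_n(\Celd)$ the model-$b$ prediction collapses to the KT estimator so that $\estimPWithSEQu{\Celd}{\rvl^t|\rvp^t}=\estimPWithBayesJeffreys{\Celd}{}{\rvl^t}$, and then the algebraic identity $\alpha^\Celd_{t+1}+\tfrac{1}{2}\beta^\Celd_{t+1}=\tfrac{1}{2}$ closes the induction. Your write-up is in fact more explicit than the paper's (which silently folds the leaf-collapse and the inductive hypothesis into one substitution), and your additional paragraph explaining how the invariant applied to the child cells recovers Eq.~\ref{eq:weight-init} is a welcome clarification the paper leaves implicit.
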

The correctness of our algorithm follows from Lem.~\ref{lem:delayed-equiv} and \cite[Thm. 4]{veness2012context}, since for $t\geq\tau_n(\Celd)$ the original update equations are used.
  

\subsection{Complexity}

The cost of processing $\rvl_n,\rvp_n$ is linear in the depth $D_n$ of the node split by the insertion of $\rvp_n$, since the algorithm updates the weights at each node in the path leading to this node. 
If $\probaid{\rvP}$ is absolutely continuous with respect to the Lebesgue measure, 
since the full-fledged k-d tree is monotone transformation invariant, we can assume without loss of generality 
that the marginal distributions of $\rvP$ are uniform in $[0,1]$ (see \cite[Sec. 20.1]{devroye1996probabilistic}) and thus its 
profile is equivalent to that of a random binary search tree under the random permutation model (see \cite[Sec. 2.3]{mahmoud1992evolution}).
Then, $D_n$ corresponds to the cost of an unsuccessful search and $\frac{D_n}{2\log n}\to 1$ in probability (see \cite[Sec. 2.4]{mahmoud1992evolution}). 
Therefore, the complexity of processing $\rvl_n,\rvp_n$ is $\bigO{\log n}$ in probability with respect to $\probaid{\seqZN}$.



\section{Experiments}
\label{sec:rpct-experiments}


\myparagraph{Software-hardware setup.} Python code and data used for the experiments are available at \url{https://github.com/alherit/kd-switch}.
Experiments were carried out on a machine running Debian 3.16,
equipped with two Intel(R) Xeon(R) E5-2667 v2 @ 3.30GHz processors and
62 GB of RAM.


\myparagraph{Boosting finite length performance with ensembling.}
When considering finite length performance, we can be unlucky and obtain a bad 
set of hierarchical partitions (i.e., with low discrimination power).
In order to boost the probability of finding good partitions, we
can use a Bayesian mixture of $\codecx{J}$ trees. 
Bayesian mixtures trivially maintain universality.
%

\myparagraph{Two sampling scenarii for labels.} 
In the first one, labels are sampled from a Bernoulli distribution
such that $\proba{\rvL=\labelX}=\theta_0$, where $\theta_0$ is a
known  parameter.  We then we sample from $\probaid{\rvP|\rvL}$. 
In this case, the root node distribution
$\estimPWithBayesJeffreys{}{}{\rvl_n|\seq{\rvl}{n-1}}$ is replaced by
$\proba{\rvLi{n}=\rvli{n}}=\theta_0^{\mathds{1}_{\{\rvli{n}=\labelX
    \}} } (1-\theta_0)^{\mathds{1}_{\{\rvli{n}=\labelY \}} }$, since
$\theta_0$ is known.
In the second one, observations come in random order and
$\proba{\rvL}$ is unknown.

\subsection{Normalized log loss (NLL) convergence} 
\label{sec:ll-convergence}

\myparagraph{Datasets.} 
\ifLONG
We use the following datasets:
{\bf (L-i)} A $2D$ dataset consists of two Gaussian Mixtures of nine
components each, in two dimensions. The individual Gaussian are chosen
to span three different scales---details in Appendix
\ref{sec:mix-model}.
{\bf (L-ii)} A dataset in dimension $d=784$ composed of both real MNIST
digits, as well as digits generated by a Generative Adversarial
Network trained on the MNIST dataset \tored[REF]\toblack.  We consider
the real samples as coming from $P_{Z|L=0}$ and the synthetic ones
from $P_{Z|L=1}$.  See details in Appendix \ref{sec:mix-model}.
{\bf (L-iii)} The Higgs dataset \cite{Lichman:2013,baldi2014searching},
the goal being to distinguish the signature of processes producing
Higgs bosons from background processes which do no---$d=4$t. We use the same
four low-level features (azimuthal angular momenta $\phi$ for four
particle jets) which are known to carry very little discriminating
information.
{\bf (L-iv)} The Breast Cancer Wisconsin (Diagnostic) Data Set
\cite{Lichman:2013}---dimension $d=30$.  
\fi
We use the following datasets, detailed in Appendix \ref{sec:log-loss-datasets}:
{\bf (L-i)} A $2D$ dataset consists of two Gaussian Mixtures spanning
three different scales.
{\bf (L-ii)} A dataset in dimension $d=784$ composed of both real
MNIST digits, as well as digits generated by a Generative Adversarial
Network \cite{RadfordMC15} trained on the MNIST
dataset. 
{\bf (L-iii)} The Higgs dataset \cite{Lichman:2013}, the goal being to
distinguish the signature of processes producing Higgs bosons.
{\bf (L-iv)} The Breast Cancer Wisconsin (Diagnostic) Data Set
\cite{Lichman:2013}---dimension $d=30$.  

For cases (L-i,L-ii,L-iii), in order to feed the online predictors, we
apply the first sampling scenario for labels.
For case (L-iv), we apply the second one and, in
each trial, we take the pooled dataset in a random order to feed the
online predictors.

\myparagraph{Results.}  We focus on the cumulative normalized log loss
performance (NLL), and the trade-off with the computational
requirements---by limiting the running time to 30'.

We compare the performance of our online predictors $\estimPWithKDSi$
and $\estimPWithKDWi$ (see Rmk.~\ref{rmk:kdw-def}) with a number of
trees $\codecx{J}\in \{1,50\}$, against the following contenders.
The Bayesian mixture of knn-based sequential regressors proposed in \cite{lheritier2018sequential},
with a switch distribution using a horizon-free prior as $\estimPWithKDSi$. 
Practically, this predictor depends on a given set of functions of $n$ specifying the number of neighbors.
We use the same set specified in \cite{lheritier2018sequential}.
We also consider a Bayesian Mixture
  of Gaussian Processes Classifiers (gp) with RBF kernel width
  $\sigma\in\{2^{4i}\}_{i=-5...7}$. 
(Our implementation uses the scikit-learn
  GaussianProcessClassifier~\cite{scikit-learn}.  For each
  observation, we retrain the classifier using all past
  observations---a step requiring samples from the two populations.
  Thus, we predict with a uniform distribution (or $\proba{\rvL}$ when
  known) until at least one instance of each label has been observed.)
In the case (L-i), we also compare to the true conditional
probability, which can be easily derived since $\probaid{\rvP|\rvL}$ are
known.  Note that the normalized log loss of the true conditional
probability converges to the conditional entropy by the
Shannon–McMillan–Breiman theorem \cite{cover2006elements}.

Fig.~\ref{fig:log-loss} (Left and Middle) shows the NLL convergence with respect to the number of
	samples. Notice that due to the 30' running time budget, curves stop at different $n$.
	 Fig.~\ref{fig:log-loss} (Right) illustrates the computational complexity of each method.
For our predictors, the statistical efficiency increases with the
number of trees---at the expense of the computational burden.
Weighting performs better than switching for datasets (L-i, L-ii,
L-iv), and the other way around for (L-iii). 
knn takes some time to get the {\em right scale}, then converges fast in
most cases---with a plateau though on dataset L-ii though.  This
owes to the particular set of exponents used to define the mixture of
regressors~\cite{lheritier2018sequential}.
Also, knn is computationally demanding, in
particular when compared to our predictors with $\codecx{J}=1$.

\begin{figure}
\begin{center}
\includegraphics[width=.325\linewidth]{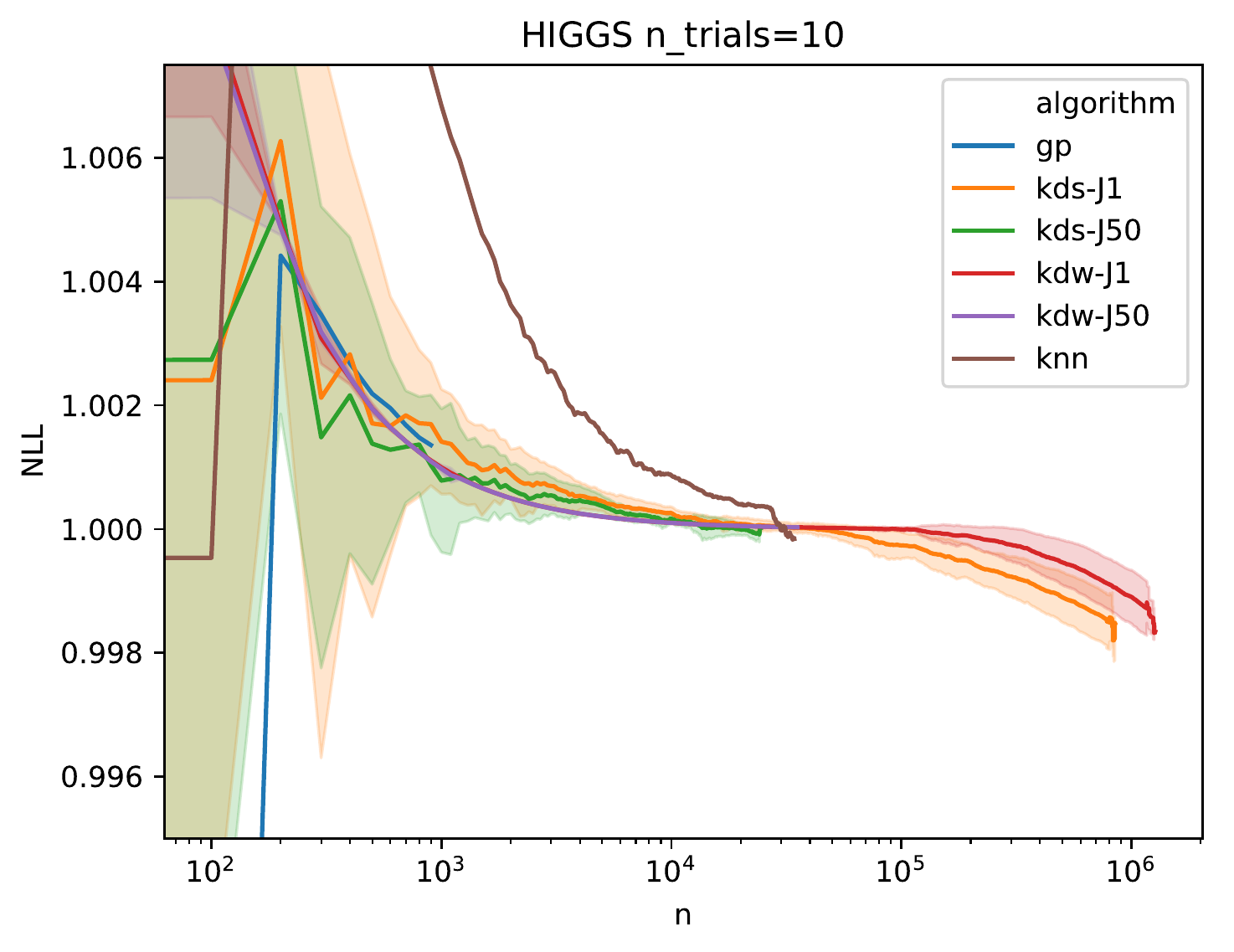} 
\includegraphics[width=.325\linewidth]{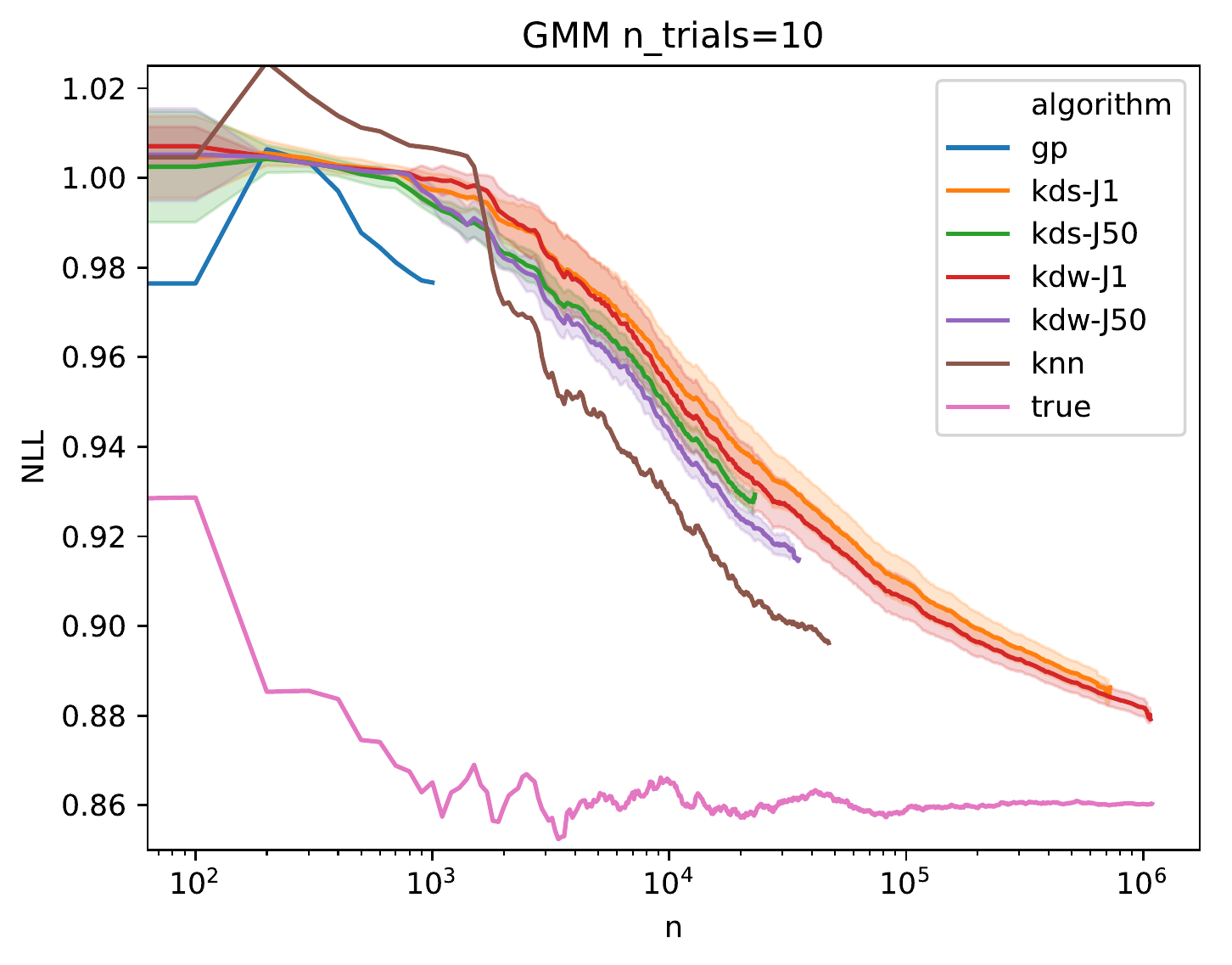} 
\includegraphics[width=.325\linewidth]{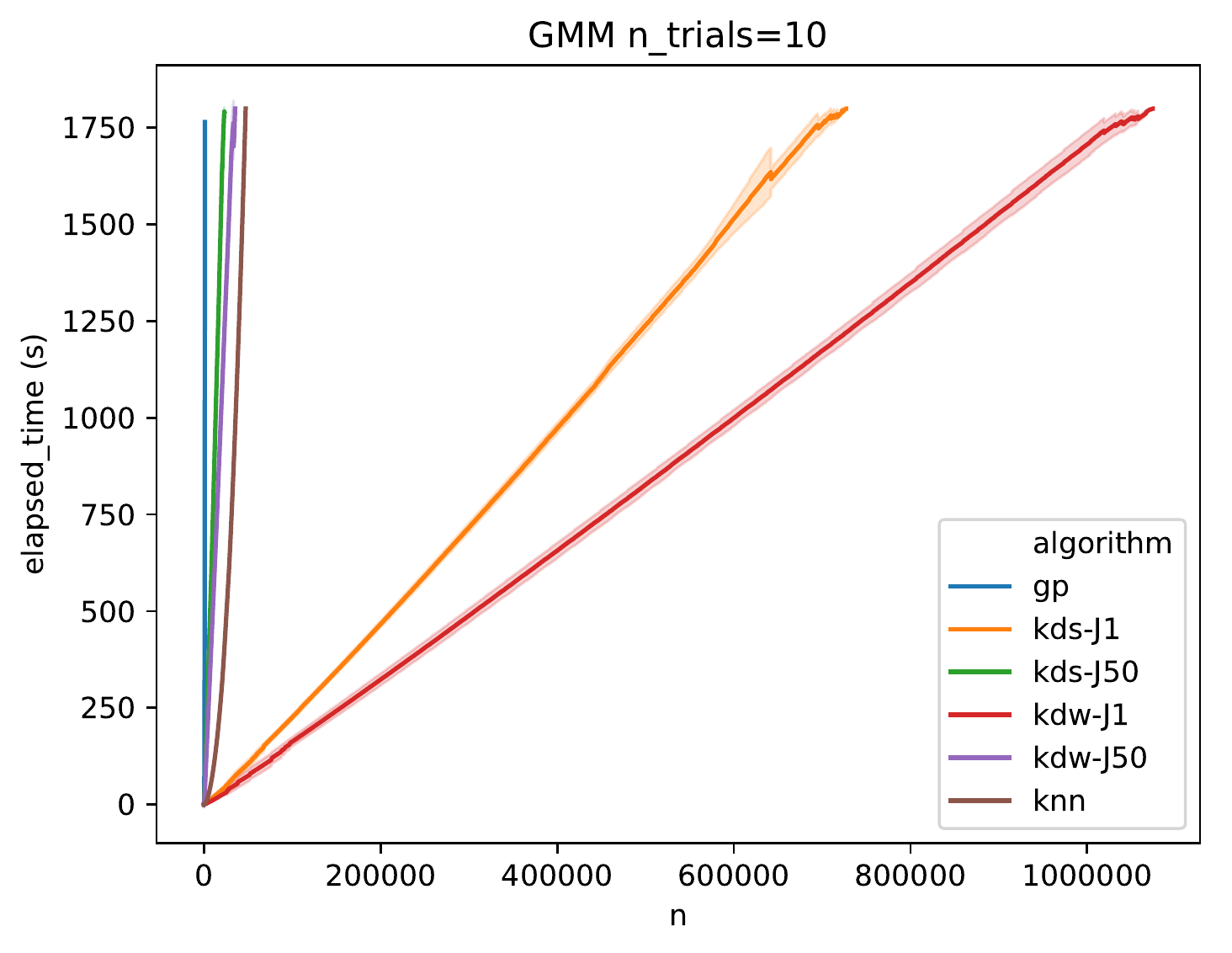}\\
\includegraphics[width=.325\linewidth]{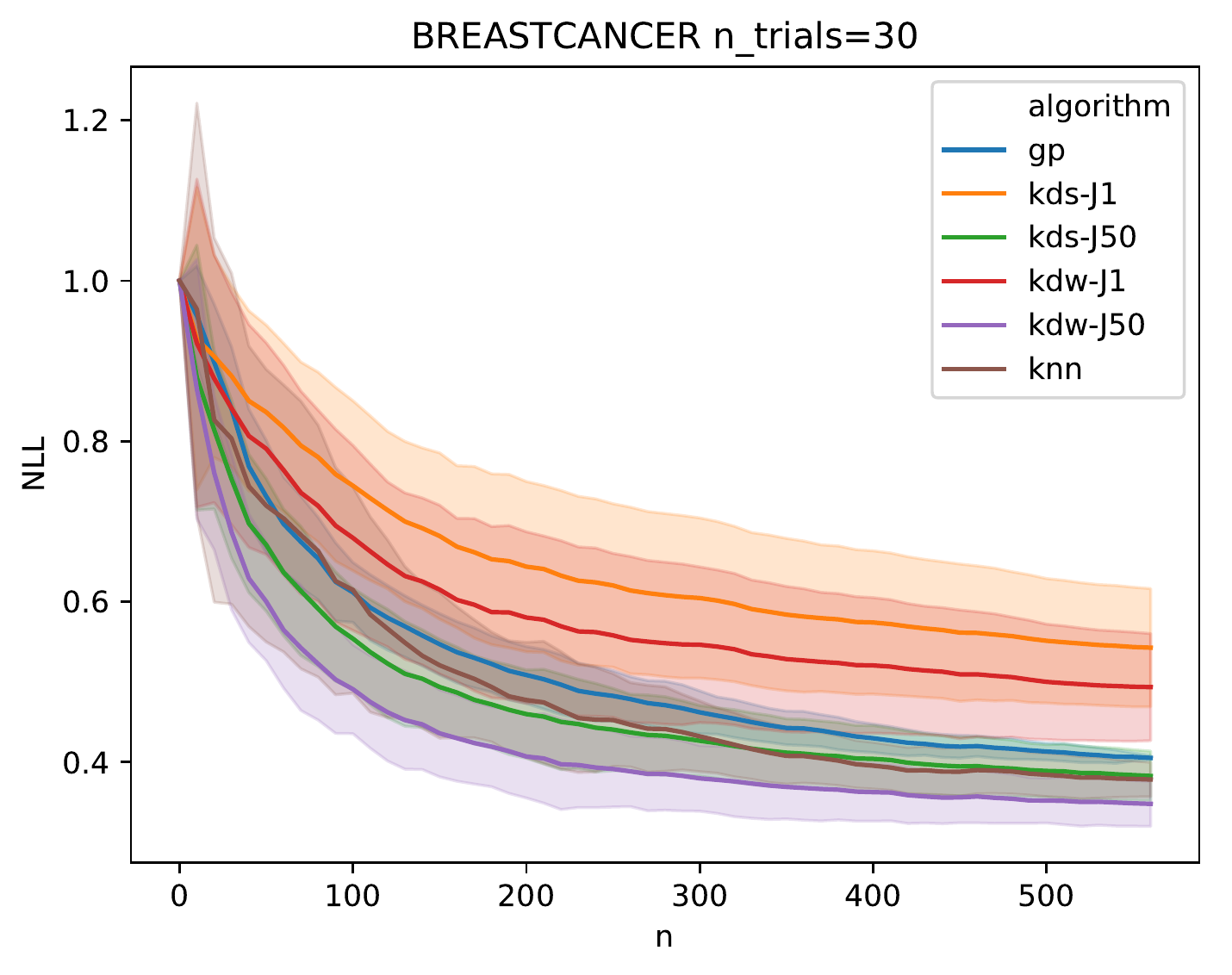}
\includegraphics[width=.325\linewidth]{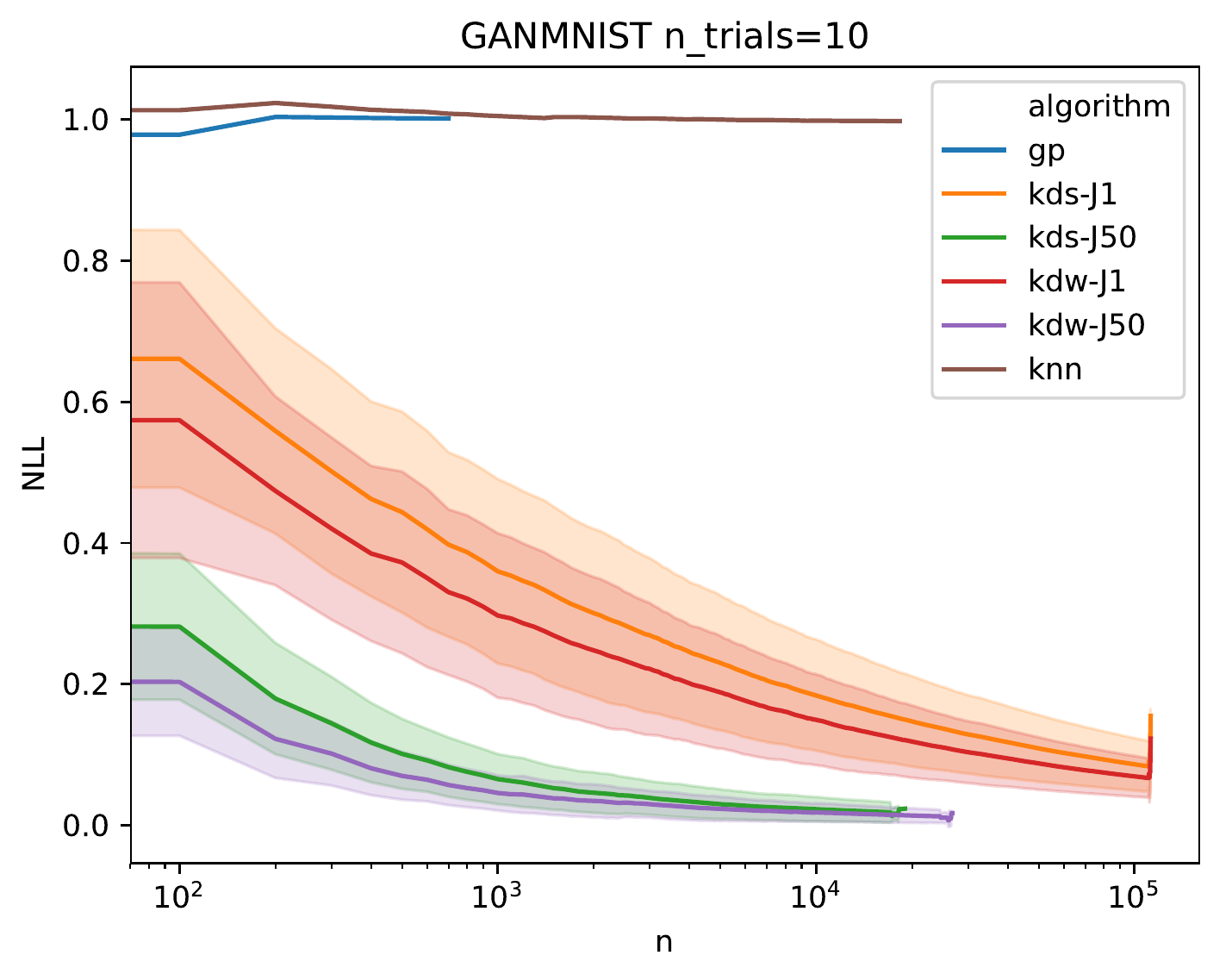} 
\includegraphics[width=.325\linewidth]{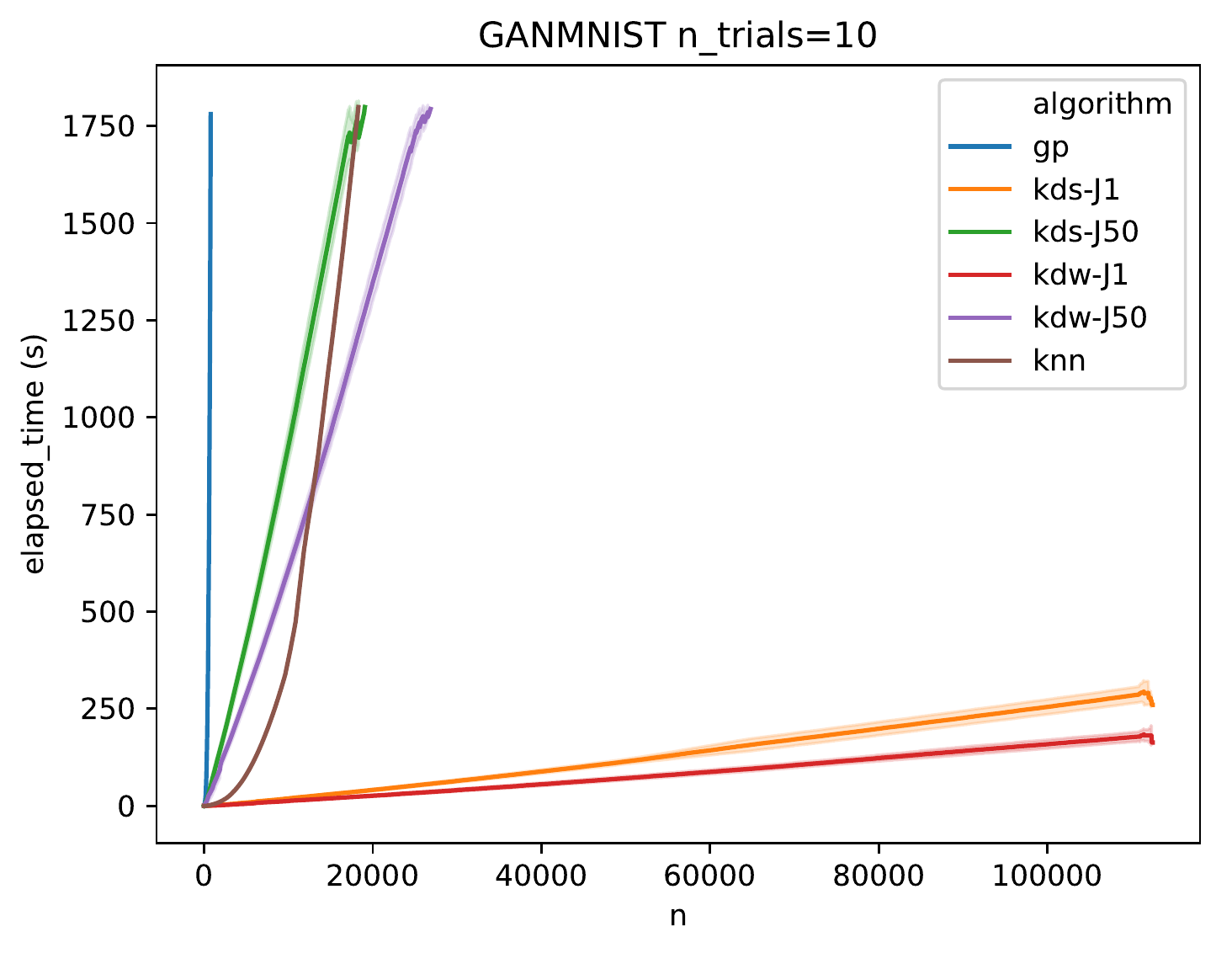}
\end{center}
\caption{
{\bf (Left and Middle)} 
Convergence of NLL as a function of $n$,  for a 30' calculation.
{\bf (Right)} 
Running time as a function of $n$.
Error bands represent the std dev.~w.r.t. the randomness
in the tree generation except for dataset (L-iv) where they represent the std dev.~w.r.t. the shuffling of the data.
}
\label{fig:log-loss}
\end{figure}


%
%

\subsection{Two-sample testing (TST)}
\label{sec:tst}

\myparagraph{Construction.} Given samples from two distributions,
whose corresponding random variables $X\in\Rd$ and $Y\in\Rd$ are
$\iid$, a nonparametric two-sample test tries to determine whether the
null hypothesis $\probaid{X} = \probaid{Y}$ holds or not (see, e.g., \cite[Section 6.9]{lehmann2005testing}).
Consistent sequential two-sample tests with optional stop (i.e. the
$p$-value is valid at any time $n$) can be built from a pointwise
universal online predictor $\arbDistQd$ \cite{lheritier2018sequential} by defining
$(L, Z)$ as:  $(0, X)$ with probability $\theta_0$, or  $(1, Y)$ with
probability $1-\theta_0$, where $\theta_0$ is a design parameter set to $1/2$ in the following experiments. The $p$-value is the likelihood ratio $\frac{\proba{\seqlN} } 
{ \arbDistQ{\seqlN|\seqzN}}$. Note this corresponds to the first sampling
scenario for labels.
The instantiation of this construction with $\estimPWithKDSi$ and $\codecx{J}=50$  is denoted \chronoswitchtst.

\myparagraph{Contenders.}
We compare \chronoswitchtst against \swsbmapfs from
\cite{lheritier2018sequential}, denoted \knnbmaswitch:
a sequential two-sample test obtained by instantiating the construction described above with the online knn 
predictor described in the previous section.
%
We also compare \chronoswitchtst against the kernel tests
from \cite{jitkrittum2016interpretable}:  \mefull, \megrid,
\scffull, \scfgrid, \mmdquad, \mmdlin, and the classical Hotelling's
\ttwo for differences in means under Gaussian assumptions.
 These tests depend on a kernel width $\sigma$ learned on a
trained set---the train-test paradigm---as opposed to \chronoswitchtst
which automatically detects the pertinent scales.
Contenders were launched with the hyperparameters specified in their
respective paper.  
For a fair comparison between sequential methods and those tests using
the train-test paradigm with $n_\text{test}$ used for testing, we use
a number of samples $n=4n_\text{test}$---detail in Appendix
\ref{sec:tst-datasets}.

\myparagraph{Datasets.}
\label{exp:1}
We use the four datasets from \cite[Table 1]{jitkrittum2016interpretable}: 
{\bf (T-i)} Same Gaussians in dimension $d=50$, to assess the
type I error;
{\bf (T-ii)}   Gaussian Mean Difference (GMD): normal distributions with a difference
in means along one direction, $d=100$;
{\bf (T-iii)} Gaussian Variance Difference (GVD): normal 
distributions with a difference in variance along one direction,  $d=50$;
{\bf (T-iv)}
Blobs (Mixture of Gaussian distributions centered on a
lattice)  \cite{gretton2012optimal}.
Datasets (T-ii, T-iii, T-iv) are meant to assess type II error.
To prevent k-d tree cuts to exploit the particular direction where the difference 
lies, these datasets undergo a random rotation (one per tree). See Appendix \ref{sec:tst-without-random-rot} for results without rotations.

\newcommand{\tableContent}{
	\centering
	\begin{tabular}{|l||l||l|l|l|}
		\hline
		Case  &  d & \scfgrid & \knnbmaswitch & \chronoswitchtst \\
		\hline
		GMD & ..& .. & .. & .. \\
		GVD & .. & .. & .. & ..\\
		SG & ..& .. & .. & .. \\
		BLOBS & ..& .. & .. & .. \\
		\hline
		
	\end{tabular}
	\caption{ {\bf CAPTION.} caption. }
	\label{tab:res-higgs} 
}

\begin{figure}
		\includegraphics[height=20mm]{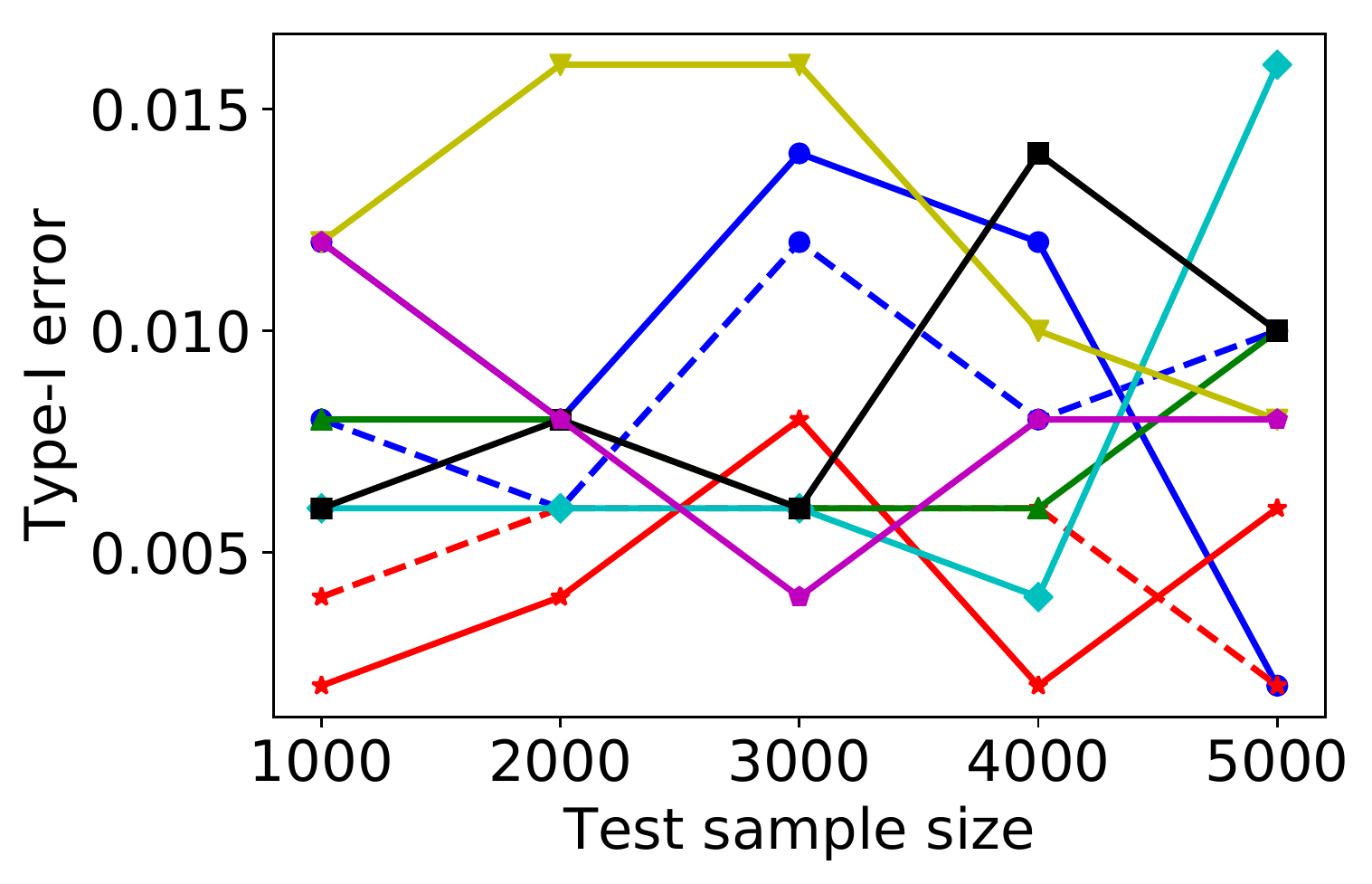} 
		\includegraphics[height=20mm]{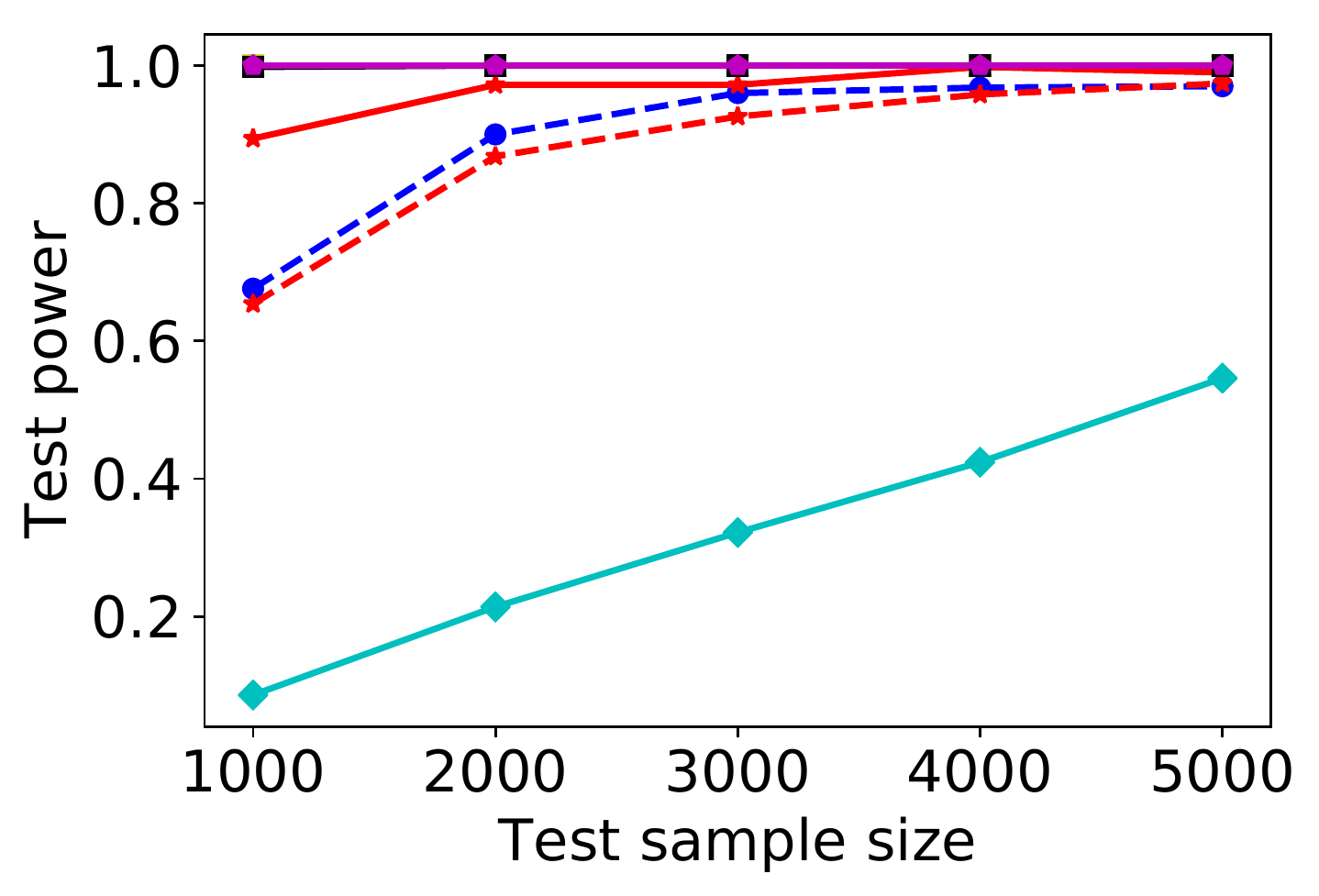} 
		\includegraphics[height=20mm]{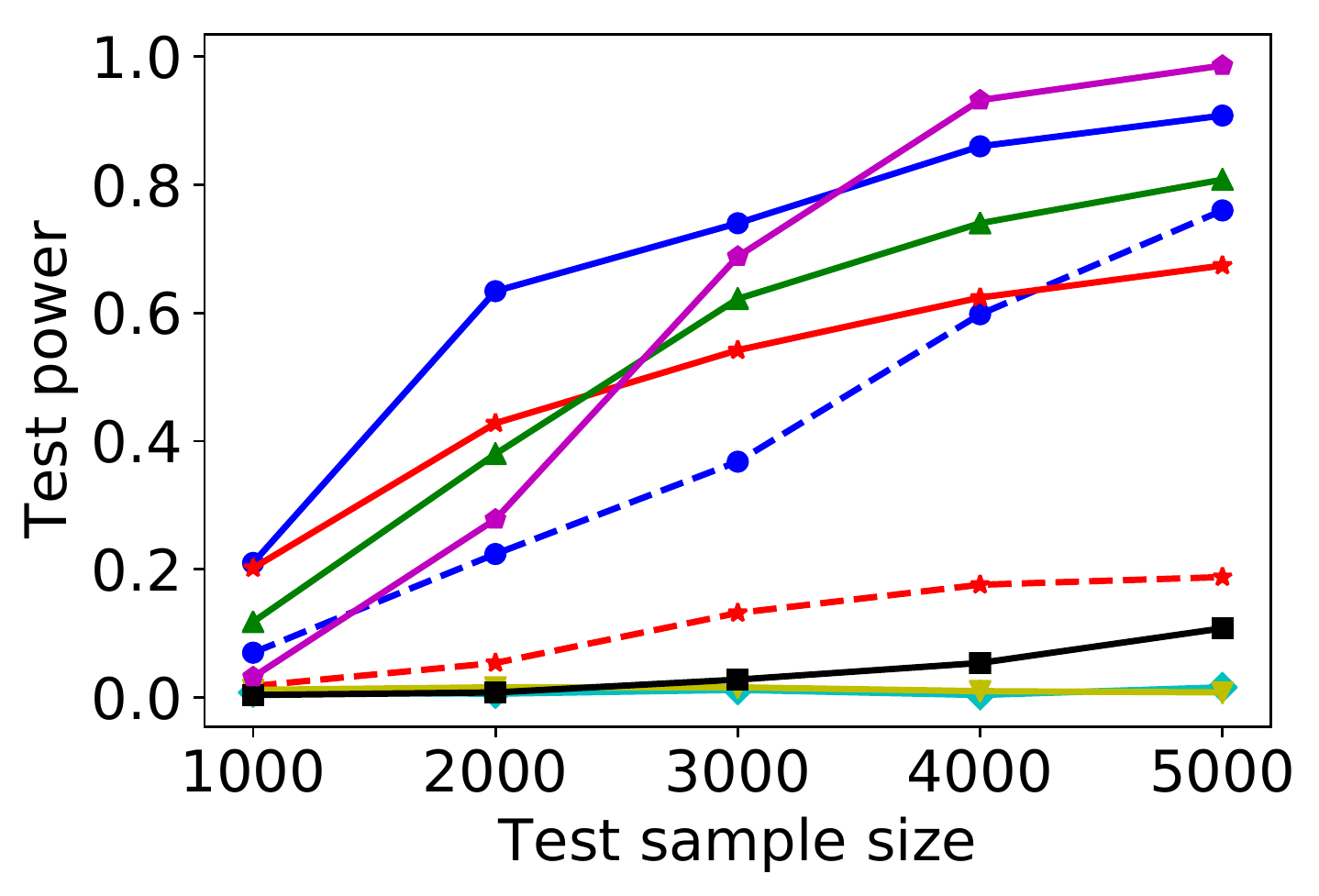} 
		\includegraphics[height=20mm]{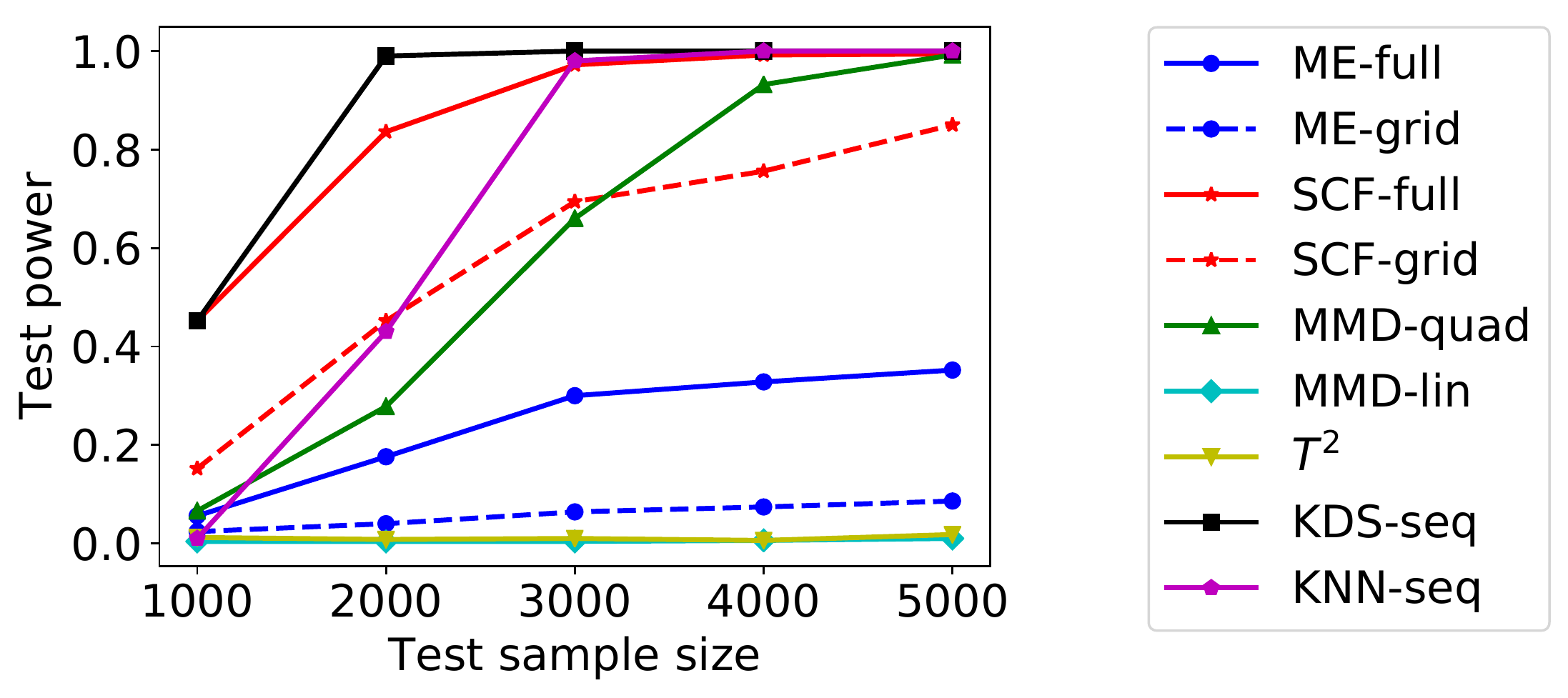}
		\begin{tabular}{cccc}
			~~~(a) SG. $d = 50$.~~~~~ & (b) GMD. $d = 100$. & (c) GVD. $d = 50$. & ~~(d) Blobs. $d = 2$.
		\end{tabular}
		
	\caption{{\bf Tests on randomly rotated Gaussian datasets from \cite{jitkrittum2016interpretable}.} The abscissa represents the test sample size $n_{\text{test}}$ for each of the two samples. 
Thus, for sequential methods, $n=4n_\text{test}$.}
	\label{fig:tests-gd} 
\end{figure} 

\myparagraph{Results.}  The significance level is set to
  $\alpha=.01$ in all the cases. The Type I error rate and the power ($1 - \text{Type II error rate}$) are computed over 500 trials. In the SG case (Fig.~\ref{fig:tests-gd}(a)), all the tests have a Type I error rate around the specified $\alpha$ as expected.
In the GMD and Blobs cases (Fig.~\ref{fig:tests-gd}(b,d)),
\chronoswitchtst matches or outperforms all the contenders.  
  On Blobs, \chronoswitchtst outperforms \knnbmaswitch
thanks to its automatic scale detection, even though the mixture
used by the latter allows it to handle the multiple scales.
For GVD (Fig.~\ref{fig:tests-gd}(c)), our results are weaker.
To see why, recall that GMD is generated by adding one unit to one
coordinate of the mean vector, while GVD is obtained by doubling the
variance along one direction. The span of the latter dataset
is larger, and upon rotating the data---see comment above---all
directions are impacted.  Given the high dimensionality, the
partitioning of k-d trees faces more difficulties to reduce the diameter 
of cells, which is key to convergence---see Corollary~\ref{cor:silva}.


\section{Outlook}

We foresee the following research directions.
A first open question is to characterize the situations where
switching should be preferred over weighting.
A second core question is to quantify the ability of our k-d tree based
construction to cope with multiple scales in the data.
A third one is the derivation of finite length bounds related to the
{\em complexity} of the underlying conditional distribution.
Finally,  accommodating data in a metric (non Euclidean)
space, using e.g.~metric trees, would widen the application spectrum
of the method.

\ifLONG
\section{Conclusion}

We proposed a novel online predictor with side information based on a
full-fledged k-d tree and the context tree switching algorithm. We
proved its pointwise universality.  We assessed its practical
performance in terms of discrimination power using the framework of
two-sample tests based on universal online predictors, yielding a
non-parametric consistent two-sample test with optional stopping
condition---that is the $p$-value is valid at anytime
$n$. Practically, we boosted the construction by using an ensemble of
randomly-oriented trees. We compared our method to state-of-the-art
non-sequential and sequential two-sample tests, the latter having a
linear time complexity to process the $n$-th sample. Our comparisons
show the effectiveness of our method to automatically detect the
relevant scales in the data, yielding a better discrimination power
with an $\bigO{\log n}$ time complexity, in probability.

Regarding future work, accommodating data in a metric (non Euclidean)
space, using e.g. metric trees, would widen the application spectrum
of the method.  \rred{As future work, we foresee deriving finite
  length bounds which would naturally depend on the "complexity" of
  the underlying conditional distribution as in
  [REF]Kakade. Characterize the situations where switching should be
  preferred over weighting.}
\fi


\subsubsection*{Acknowledgments}
We would like to thank María Zuluaga, Eoin Thomas, Nicolas Bondoux and Rodrigo Acuña-Agost for insightful comments and, also, Wittawat Jitkrittum and Arthur Gretton for providing us the complete output of their experiments.

{\small
	\bibliographystyle{abbrv}
	\bibliography{mcs,abs}%
}

\newpage
\onecolumn
\appendix
\section{Proofs}

\begin{proof}[Lemma \ref{lem:ctw-twice-universal}]
	We follow the lines of the proof of \cite[Thm. 3]{veness2012context}, except 
	that we need to take into account the chronology of cell creation.  
	For notational convenience we omit $\Pi$ in $\estimPWithCTSi{\cdot}{\Pi}$.
	
	
	Let $\tilde{A}$ denote the set of cells 
	corresponding to internal nodes in the tree structure leading to $A$.
	Given a cell $\Celd$ and $\seqlzN$, let us denote $\nbSamplesInCell{\Celd}\equiv\nbSamples{\Celdof{\seqzN}}$, $\underline{l}^{n_\Celd}\equiv \Celd(l^n)$ and $\underline{z}^{n_\Celd}\equiv \Celd(z^n)$. 
	Then, $\forall n\geq 1, \forall l^n\in\alphabet^{n}, \forall z^n\in\Omega^{n}$, by dropping the sum in Equation \ref{eq:cts-def} and choosing the model index sequences $bb\dots b$ for internal nodes and $aa\dots a$ for leaf nodes,  we have that for any cell $\Celd\in \tilde{A}$
	\begin{align}
	\estimPWithCTSu{\Celd}{\Celd(l^n)|\Celd(z^n)} &\geq 
	w_\Celd(b^{\nbSamplesInCell{\Celd}}) \estimPWithBayesJeffreys{\Celd}{}{\underline{l}^{	\tau_n(\Celd)-1 	}}
	\prod_{k=\tau_n(\Celd)}^{\nbSamplesInCell{\Celd}} \sum_{j=1}^{2} 
	\mathds{1}_{\{\underline{\rvp}_k\in\Celi{j}\}}
	\frac{\estimPWithCTSu{\Celi{j}}{\Celiof{j}{\seq{\underline{\rvl}}{k}}
			\vert\Celiof{j}{\seq{\underline{\rvp}}{k} }
	}}
	{\estimPWithCTSu{\Celi{j}}{ \Celiof{j}{\seq{\underline{\rvl}}{k}}\minuslast
			\vert\Celiof{j}{\seq{\underline{\rvp}}{k} }\minuslast
	}} \nonumber \\
	\label{eq:cts-bound-case-internal}
	&= w_\Celd(b^{\nbSamplesInCell{\Celd}}) \estimPWithBayesJeffreys{\Celd}{}{\underline{l}^{\tau_n(\Celd)-1 	}}
	\prod_{j=1}^{2}
	\frac{\estimPWithCTSu{\Celi{j}}{\Celiof{j}{\seq{\underline{\rvl}}{\nbSamplesInCell{\Celd}}}
			\vert\Celiof{j}{\seq{\underline{\rvp}}{\nbSamplesInCell{\Celd}} }
	}}
	{\estimPWithCTSu{\Celi{j}}{ \Celiof{j}{\seq{\underline{\rvl}}{\tau_n(\Celd)}}\minuslast
			\vert\Celiof{j}{\seq{\underline{\rvp}}{\tau_n(\Celd)} }\minuslast
	}}
	\end{align}
	and for any cell $\Celd\in A$
	\begin{equation}
	\label{eq:cts-bound-case-leaf}
	\estimPWithCTSu{\Celd}{\Celd(l^n)|\Celd(z^n)} \geq 
	w_\Celd(a^{\nbSamplesInCell{\Celd}}) \estimPWithBayesJeffreys{\Celd}{}{\Celd(l^n)}
	.
	\end{equation}

	Then, by repeatedly applying Eq.~\ref{eq:cts-bound-case-internal} at internal nodes and Eq.~\ref{eq:cts-bound-case-leaf} at leaf nodes, we obtain
	\begin{equation}
	\estimPWithCTSu{\Omega}{l^n|z^n} \geq 
	\left( \prod_{\Celd\in A} w_\Celd(a^{\nbSamplesInCell{\Celd}}) \right) 
	\left( \prod_{\Celd\in \tilde{A}} w_\Celd(b^{\nbSamplesInCell{\Celd}}) \right)  
	\left(\prod_{\Celd\in A} \estimPWithBayesJeffreys{\Celd}{}{\Celdof{l^n}} \right) \kappa 
	\end{equation}
	where $\kappa$ groups all the terms from Eq.~\ref{eq:cts-bound-case-internal} that do not depend on $n$. We have
	\begin{equation}
	w_\Celd(a^{\nbSamplesInCell{\Celd}}) = \frac{1}{2} 
	\prod_{t=2}^{\nbSamplesInCell{\Celd}} \frac{t-1}{t} = \frac{1}{2\nbSamplesInCell{\Celd}}  \geq \frac{1}{2n}
	\end{equation}
	where the factor $\frac{t-1}{t}$ for $t\geq2$ comes from the prior probability of not switching.  
	Analogously, $w_\Celd(b^{\nbSamplesInCell{\Celd}}) \geq \frac{1}{2n}$. (Note that, in a Context Tree Weighting scheme (see Remark \ref{rmk:kdw-def}), the prior probability of not switching is 1 and thus, the lower bound for $w_\Celd(a^{\nbSamplesInCell{\Celd}})$ and $w_\Celd(b^{\nbSamplesInCell{\Celd}})$ becomes $1/2$.)
	Then, 
	\begin{align}
	\estimPWithCTSu{\Omega}{l^n|z^n} \geq \kappa 
	(2n)^{-\Gamma_A} 
	\prod_{\Celd\in A}  \estimPWithBayesJeffreys{\Celd}{}{\Celdof{l^n}} 
	.
	\end{align}

	The claimed inequality follows since, from \cite[Eq. 23]{willems1995context} (see e.g. \cite[Eq. 17]{begleiter2006superior} for $\size{\alphabet}>2$), 
	\begin{equation}
	-\log \prod_{\Celd\in A} \estimPWithBayesJeffreys{\Celd}{}{\Celdof{l^n}} \leq
	|A|\zeta\left(\frac{n}{|A|}\right) -\log P_{\thetaVector{A}}(\seqlcondzN)
	.
	\end{equation} 
	The limit follows from the Shannon–McMillan–Breiman theorem (see, e.g., \cite{cover2006elements}).
\end{proof}

\begin{proof}[Corollary \ref{cor:silva}]
	The conditional entropy can be written as  
	\begin{align}
	\centri{\rvL}{\rvP}  &= \entr{\rvL} - \mutinfo{\rvL}{\rvP} \\
	&= \entr{\rvL} - \expeci{\probaid{\rvL}}{\divKL{\probaid{\rvP|\rvL}}{\probaid{\rvP}}}
	\end{align}
	where $\divKL{\cdot}{\cdot}$ denotes the Kullback-Leibler divergence.
	Since $\probaid{\rvP|\rvL}$ are absolutely continuous with respect to 
	$\probaid{\rvP}$ and these measures are 
	absolutely 
	continuous with respect to the Lebesgue measure, the claim follows from
	\cite[Thm. 4.2]{silva2008optimal}, which guarantees 
	\begin{equation*}
	\divKL{\probaid{\pi_n(\rvP|\seqZN)|\rvL}}{\probaid{\pi_n(\rvP|\seqZN)}}  \toas \divKL{\probaid{\rvP|\rvL}}{\probaid{\rvP}} 
	.\qedhere
	\end{equation*}
\end{proof}

\begin{proof}[Lemma \ref{lem:k-d-tree}]
	By Markov's inequality, it is sufficient to show that 
	\begin{equation}
	\expeci{\probaid{\rvP}}{\diameter(\pi_n(\rvP|\seqZN)) } \town 0 \as.
	\end{equation}
	As in \cite[Sec. 20.1]{devroye1996probabilistic}, since the k-d tree is 
	monotone transformation invariant, we can assume without loss of generality 
	that $\rvP\in[0,1]^d$.
	In the proof of \cite[Thm. 20.3]{devroye1996probabilistic}, it is shown 
	that 
	for any $\epsilon>0$ and any $x\in\Rd$
	\begin{equation}
	\label{eq:diam-cond-devroye}
	\diameter(\pi_n(x|\seqZN)) \leq 2 \epsilon \sqrt{d} 
	\end{equation}
	if some specific event $E(x,\seqZN,\epsilon)$ holds.
	(Devroye presents the proof for the case $d=2$, leaving the  straightforward
adaptation for $d>2$ to the reader. 
        The proof considers a fixed point $x$ and, upon inserting $k$
        points into the k-d tree, the maximum distance from
        $x$ to the $2^d$ faces  of its containing hyper-rectangle. The event
        $E(x,\seqZN,\epsilon)$ stipulates that this maximum distance
        is bounded by $\epsilon$ upon inserting a number of points
        which may be $k=n^{1/3}$ or $k=n^{2/3}$ or $k=n$.)
	Then, it is shown that one can construct a set $B\subset\Rd$ such that
	$\probai{\rvP}{\rvP\in B}=1$ and for all $x\in B$, and sufficiently
	small $\epsilon>0$, $\probai{\seqZN}{E(x,\seqZN,\epsilon)}\town 1$.
	\smallskip
	
	Then, for $\epsilon>0$ sufficiently small, by total probability, we have
	\begin{align}
	&\probai{\seqZN,\rvP}{E(\rvP,\seqZN,\epsilon)}  \\
	&=\probai{\seqZN,\rvP}{E(\rvP,\seqZN,\epsilon)|\rvP\in
		B}\probai{\rvP}{\rvP\in B} + 
	\probai{\seqZN,\rvP}{E(\rvP,\seqZN,\epsilon)|\rvP\notin
		B}\probai{\rvP}{\rvP\notin B} \\
	&=\probai{\seqZN,\rvP}{E(\rvP,\seqZN,\epsilon)|\rvP\in B}.
	\end{align}
	Since the desired property is defined w.r.t.~$\probaid{\rvP}$ while
	\refeq{eq:diam-cond-devroye} involves the sequence $\seqZN$, we apply
	the law of total expectation with the two events
	$E(\rvP,\seqZN,\epsilon)$ and $\neg E(\rvP,\seqZN,\epsilon)$:
	\begin{align}
	\begin{split}
	&\limn \expeci{\probaid{\rvP}}{\diameter(\pi_n(\rvP|\seqZN))}
	=\\ 
	&\limn 
	\condexpeci{\probaid{\rvP}}{\diameter(\pi_n(\rvP|\seqZN))}{E(\rvP,\seqZN,\epsilon)}\probai{\seqZN,\rvP}{E(\rvP,\seqZN,\epsilon)|\rvP\in B}
	+ \\
	&\condexpeci{\probaid{\rvP}}{\diameter(\pi_n(\rvP|\seqZN))}{\neg E(\rvP,\seqZN,\epsilon)}\probai{\seqZN,\rvP}{\neg E(\rvP,\seqZN,\epsilon)|\rvP\in	B}=
	\end{split}
	\\
	\begin{split}
	&\limn 
	\expeci{\probaid{\rvP}}{\diameter(\pi_n(\rvP|\seqZN))|E(\rvP,\seqZN,\epsilon)}\cdot
	\limn \probai{\seqZN,\rvP}{E(\rvP,\seqZN,\epsilon)|\rvP\in B}
	+ \\
	&\limn \expeci{\probaid{\rvP}}{\diameter(\pi_n(\rvP|\seqZN))|\neg 
		E(\rvP,\seqZN,\epsilon)}\cdot \limn \probai{\seqZN,\rvP}{\neg 
		E(\rvP,\seqZN,\epsilon)|\rvP\in B}=
	\end{split}\\
	&\limn 
	\expeci{\probaid{\rvP}}{\diameter(\pi_n(\rvP|\seqZN))|E(\rvP,\seqZN,\epsilon)}
	\end{align}
	where the last equality stems from $\diameter(\pi_n(\rvP|\seqZN)) < \infty$ 
	and $ \limn \probai{\seqZN,\rvP}{E(\rvP,\seqZN,\epsilon)|\rvP\in B}=1$, for sufficiently small $\epsilon>0$.
	The random variable $\diameter(\pi_n(\rvP|\seqZN)$ is bounded since $\rvP\in[0,1]^d$.
	Therefore, by Lebesgue's dominated convergence theorem:
	\begin{align}
	\limn 
	\condexpeci{\probaid{\rvP}}{\diameter(\pi_n(\rvP|\seqZN))}{E(\rvP,\seqZN,\epsilon)}
	= \condexpeci{\probaid{\rvP}}{\limn  
		\diameter(\pi_n(\rvP|\seqZN))}{E(\rvP,\seqZN,\epsilon)}
	= 0
	\end{align}
	since $\diameter(\pi_n(\rvP|\seqZN)) \leq  2 \epsilon \sqrt{d}$ if  
	$E(\rvP,\seqZN,\epsilon)$ holds, for any $\epsilon>0$.
	Therefore,
	\begin{align*}
	&\probai{\seqZN}{\limn 
		\expeci{\probaid{\rvP}}{\diameter(\pi_n(\rvP|\seqZN))}=0}=1 
	.\qedhere
	\end{align*}
\end{proof}

\begin{proof}[Proof of Theorem \ref{thm:chrono-universal}] 
By Lemma \ref{lem:k-d-tree}, for any 
$\eps>0$, there exists $n(\eps)$ such that $\forall n\geq n(\eps)$
\begin{equation}
\centri{\rvL}{\pi_n(\rvP|\seqZN)} < \centri{\rvL}{\rvP} + \eps \as.
\end{equation}
Then, by Lemma \ref{lem:ctw-twice-universal} 
\begin{equation}
- \limn \frac{1}{n}\log \estimPWithKDSi{\seqLcondZN} 
\leq \centri{\rvL}{\pi_{n(\eps)}\left(\rvP|\seq{\rvP}{n(\eps)}\right)} \as .
\end{equation}
The claim follows since $\eps$ can be arbitrary small.  
\end{proof}

\begin{proof}[Lemma \ref{lem:delayed-equiv}]
	We use induction on $t$ and we denote $w_{\Celd,t}^a$ the value of $w_\Celd^a$ at the end of the updates after observing $\rvl^t$ in $\Celd$. For $t=0$, it trivially holds since the observed sequence is empty. 
	For $1\leq t < \tau_n(\Celd)$, by Eq.~\ref{eq:final-estimate} and \ref{eq:weight-update2}
	\begin{equation}
	\begin{split}
	&w_{\Celd,t}^a = \alpha^\Celd_{t+1} \estimPWithBayesJeffreys{\Celd}{}{\rvl^t} +
	\beta^\Celd_{t+1} w_{\Celd,t-1}^a 
	\estimPWithBayesJeffreys{\Celd}{}{\rvl_t|\seq{\rvl}{t-1}} \\
	&w_{\Celd,t}^b = \alpha^\Celd_{t+1} \estimPWithBayesJeffreys{\Celd}{}{\rvl^t} +
	\beta^\Celd_{t+1} w_{\Celd,t-1}^b 
	\estimPWithBayesJeffreys{\Celd}{}{\rvl_t|\seq{\rvl}{t-1}}
	\end{split}
	.
	\end{equation}
	Using the inductive hypothesis, we get
	\begin{equation}
	\begin{split}
	&w_{\Celd,t}^a = \alpha^\Celd_{t+1} \estimPWithBayesJeffreys{\Celd}{}{\rvl^t} +
	\beta^\Celd_{t+1}  \frac{1}{2} \estimPWithBayesJeffreys{\Celd}{}{\rvl^{t}} = \frac{1}{2} \estimPWithBayesJeffreys{\Celd}{}{\rvl^t} \\
	&w_{\Celd,t}^b = \alpha^\Celd_{t+1} \estimPWithBayesJeffreys{\Celd}{}{\rvl^t} +
	\beta^\Celd_{t+1} \frac{1}{2} \estimPWithBayesJeffreys{\Celd}{}{\rvl^{t}}  = \frac{1}{2} \estimPWithBayesJeffreys{\Celd}{}{\rvl^t} 
	\end{split}
	.
	\end{equation}
\end{proof}

\section{Experiments}

\subsection{Datasets for the normalized log loss convergence analysis}
\label{sec:log-loss-datasets}

\paragraph{Multiscale Gaussian Mixture dataset.}
The Gaussian mixture dataset from Section \ref{sec:ll-convergence} is built 
by generating two Gaussian Mixture models, one for each label.
The means of the Gaussians are uniformly drawn from $[0,1]^2$ and are
the same for both mixtures. The weights are randomly drawn from a
Dirichlet distribution with parameters $(1,\dots,1)$ and are the same
for both the mixtures. The covariance matrices are randomly drawn from
an inverse Wishart distribution with $d+2$ degrees of freedom and a
scale parameter $I$ for a third of the components, $.01I$ for the
second third and $.0001I$ for the last one.
See Fig. \ref{fig:gauss-mixture-exple} for an illustration.

\begin{figure*}
\centering
\includegraphics[width=.6\linewidth]{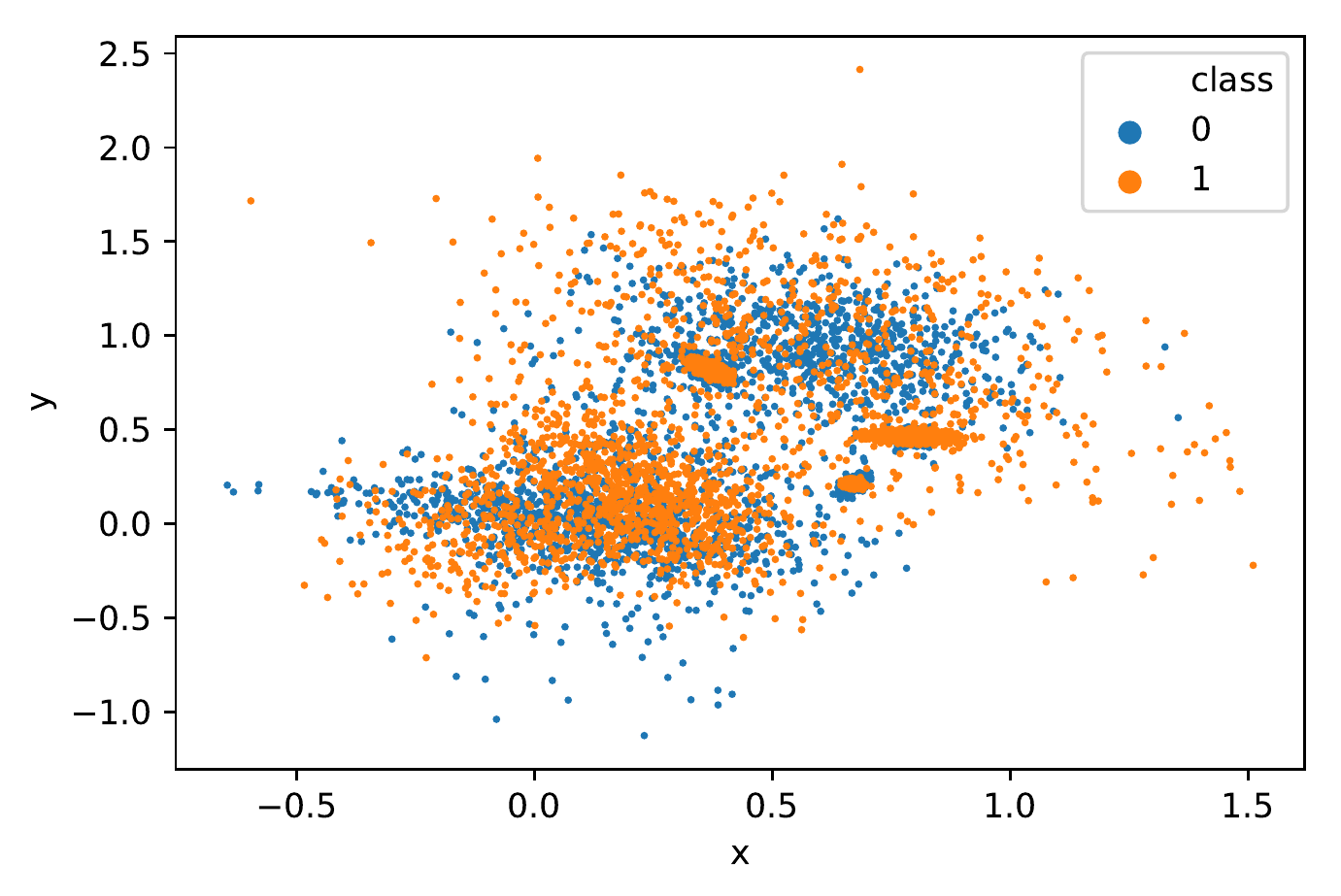}
\caption{{\bf Multiscale Gaussian Mixture dataset.}
Union of two sample sets, each from a  random mixture model.}
\label{fig:gauss-mixture-exple}
\end{figure*}

\paragraph{GAN dataset.}
We used the pretrained Deep Convolutional Generative Adversarial
Network available at
\url{https://github.com/csinva/pytorch_gan_pretrained}.  We generate
as many samples as real ones (60000).  We consider the real samples as
coming from $P_{Z|L=0}$ and the synthetic ones from $P_{Z|L=1}$. See
Fig. \ref{fig:gan} for an illustration.

\begin{figure*}
	\centering
	\includegraphics[width=.4\linewidth]{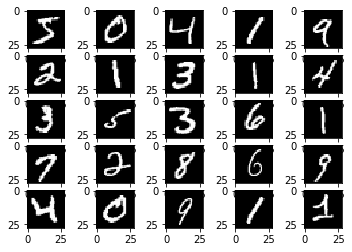}\hfill
	\includegraphics[width=.4\linewidth]{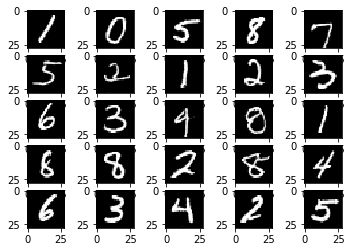}
	\caption{{\bf (Left.)} Examples of real images. {\bf (Right.)} Examples of GAN generated synthetic images.}
	\label{fig:gan}
\end{figure*}

\paragraph{HIGGS dataset.}
To distinguish the signature of processes producing Higgs bosons from
background processes which do not, we use the four low-level
features (azimuthal angular momenta $\phi$ for four particle jets)
which are known to carry very little discriminating information 
\cite{Lichman:2013,baldi2014searching}.

\subsection{Datasets and sampling for the Two-sample-test experiments}
\label{sec:tst-datasets}

\paragraph{Datasets.}
We use the four datasets from \cite[Table 1]{jitkrittum2016interpretable}: 
Same Gaussian (SG; $\probaid{X}$ and $\probaid{Y}$  are identical normal
distributions; $d=50$);
Gaussian Mean Difference (GMD; $\probaid{X}$ and $\probaid{Y}$ are normal
distributions with a difference in means along one direction; $d=100$);
Gaussian Variance Difference (GVD; $\probaid{X}$ and $\probaid{Y}$  are normal
distributions with a difference in variance along one direction; $d=50$);
Blobs (Mixture of Gaussian distributions centered on a
lattice)---a challenging case since differences occur at a much
smaller length scale compared to the global scale
\cite{gretton2012optimal}.
To prevent k-d tree cuts to exploit the particular direction where the difference 
lies, such datasets undergo a random rotation (one per tree).

\myparagraph{Sampling.}  For a fair comparison against
tests using the train-test paradigm,
sequential two-sample tests use a sample size equal to the sum of the training and
test set sizes used by the contenders. When we compare to these tests,
samples are obtained by the same sampling mechanism and with the same
random seed, using the code provided at
\url{https://github.com/wittawatj/interpretable-test}. Sequential
tests (i.e.~\chronoswitchtst and \knnbmaswitch) consume these samples
following the first sampling scenario specified in Section
\ref{sec:rpct-experiments}, with $\theta_0=.5$---labels are balanced.

\subsection{Two-sample-test experiments without random rotations}
\label{sec:tst-without-random-rot}

Figure \ref{fig:tests-gd-norot} shows the results on the original datasets without undergoing random rotations. We observe that k-d tree cuts are able to quickly detect the particular directions where the difference lies making the power significantly higher than for the randomly rotated case.

\begin{figure}
	\includegraphics[height=20mm]{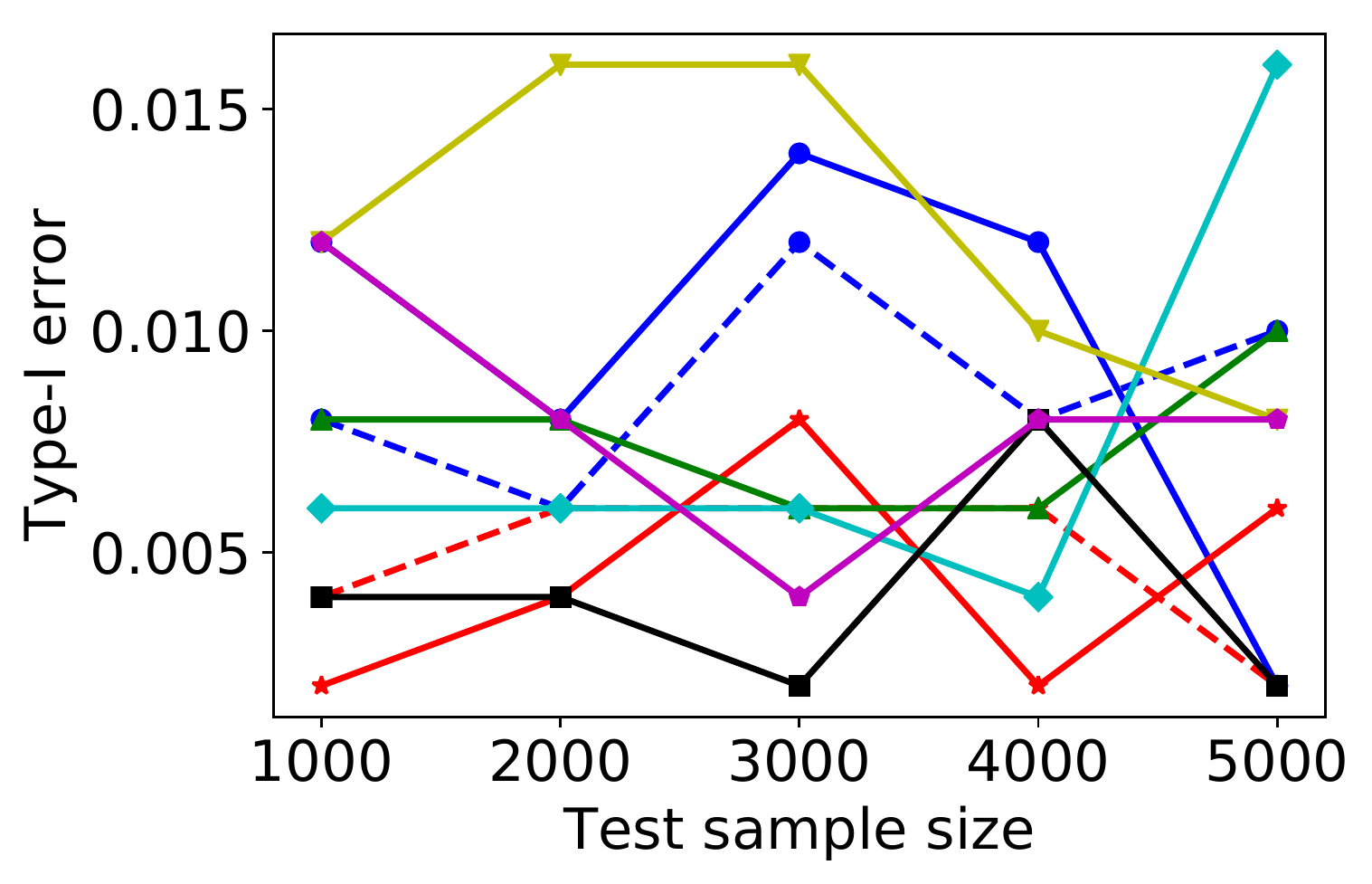} 
	\includegraphics[height=20mm]{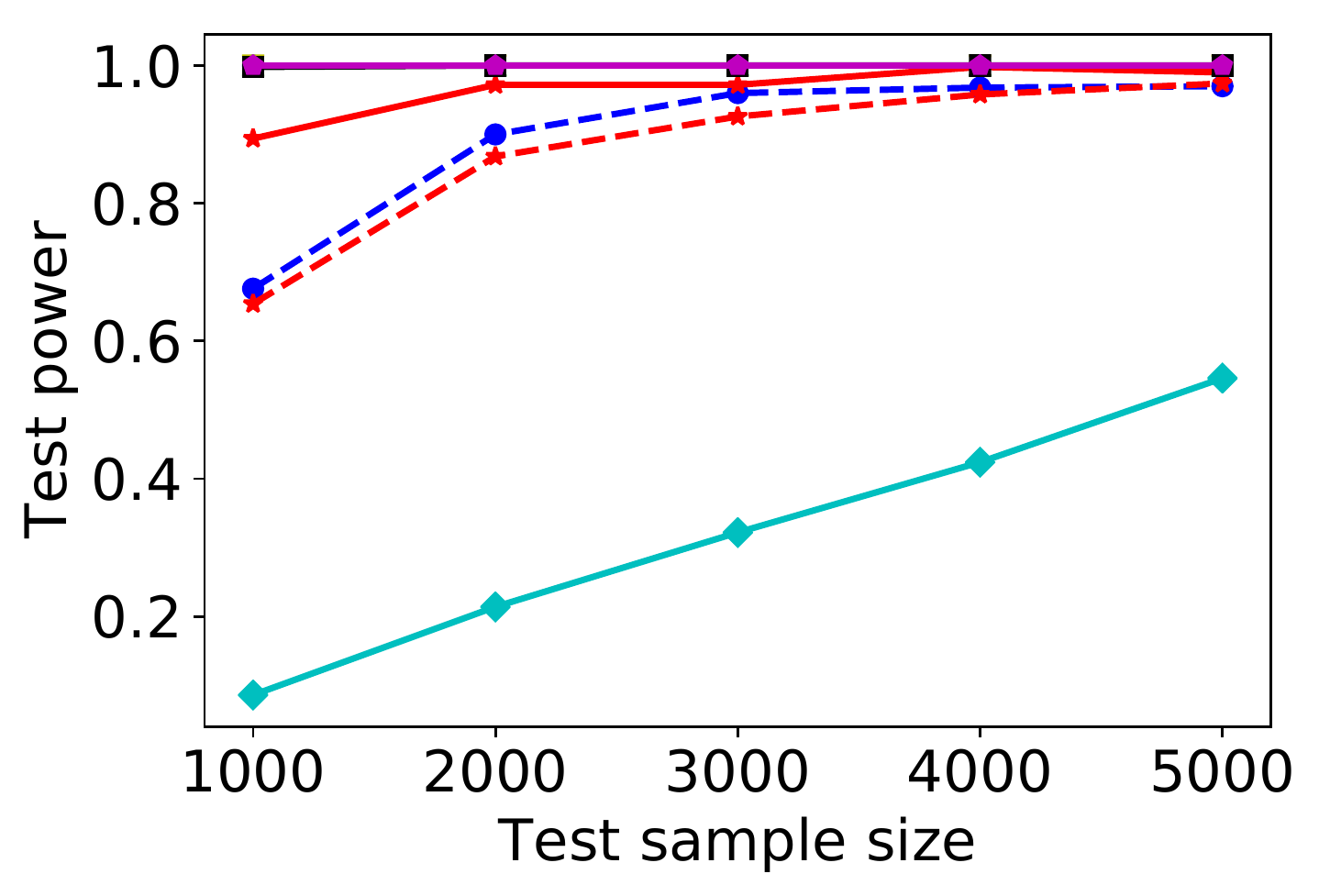} 
	\includegraphics[height=20mm]{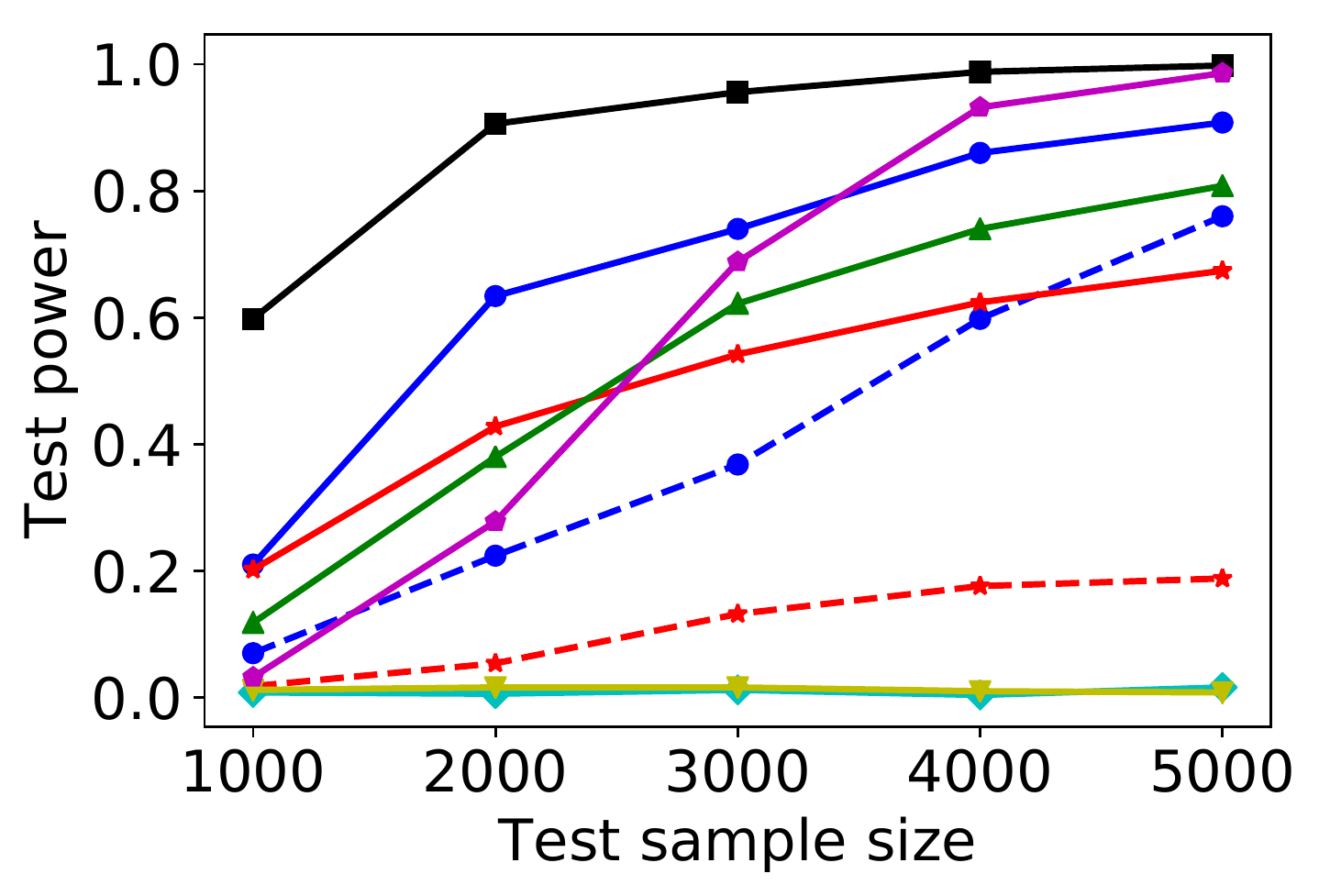} 
	\includegraphics[height=20mm]{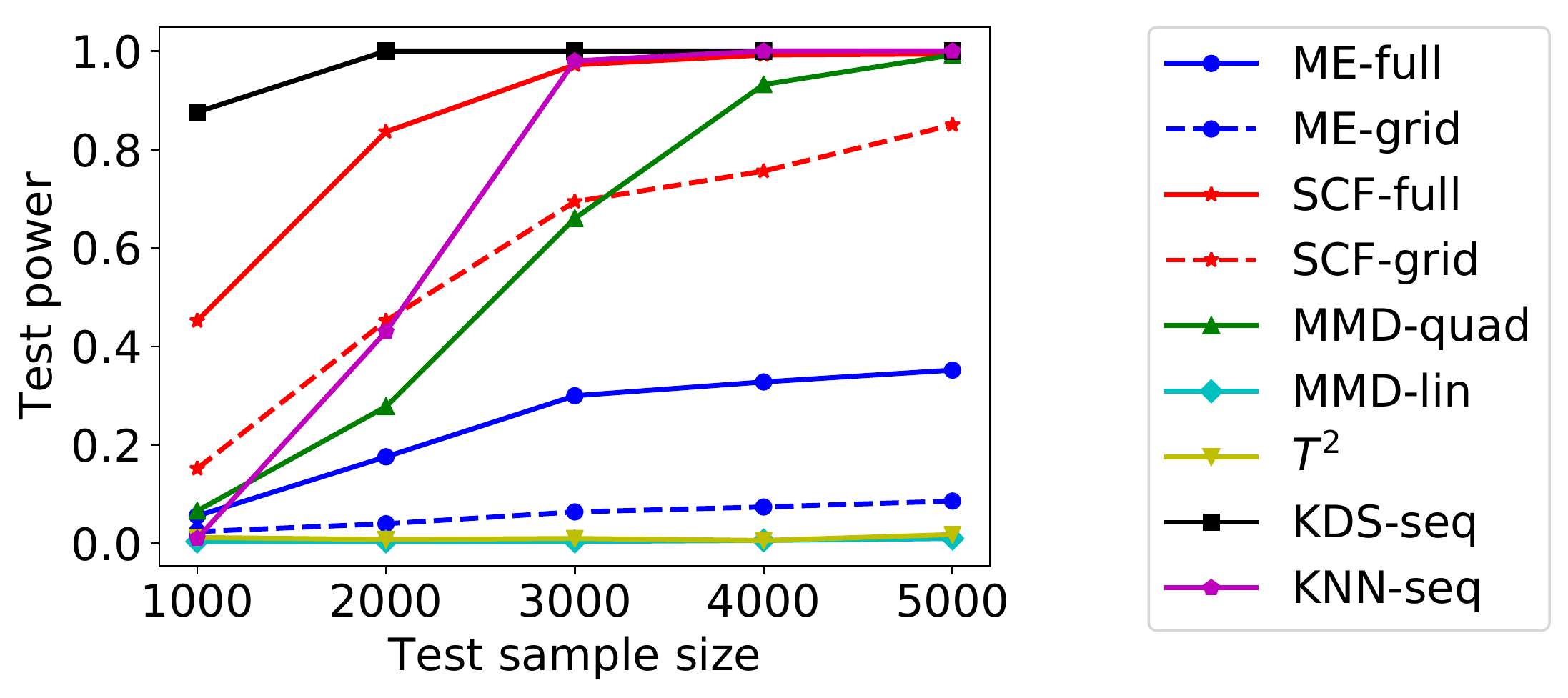}
	\begin{tabular}{cccc}
		~~~(a) SG. $d = 50$.~~~~~ & (b) GMD. $d = 100$. & (c) GVD. $d = 50$. & ~~(d) Blobs. $d = 2$.
	\end{tabular}
	
	\caption{{\bf Tests on non-rotated Gaussian datasets, as specified in \cite{jitkrittum2016interpretable}.} The abscissa represents the test sample size $n_{\text{test}}$ for each of the two samples. 
		Thus, for sequential methods, $n=4n_\text{test}$.}
	\label{fig:tests-gd-norot} 
\end{figure}

\end{document}